\documentclass{article}
\usepackage{subfig}
\usepackage{microtype}
\usepackage{graphicx}
\usepackage{booktabs} 

\usepackage{hyperref}

\usepackage[preprint]{icml2026}

\usepackage{amsmath}
\usepackage{amssymb}
\usepackage{mathtools}
\usepackage{amsthm}

\usepackage{tcolorbox}
\tcbuselibrary{breakable}

\usepackage[capitalize,noabbrev]{cleveref}

\theoremstyle{plain}
\newtheorem{theorem}{Theorem}[section]
\newtheorem{proposition}[theorem]{Proposition}

\theoremstyle{definition}

\newtheorem{assumption}[theorem]{Assumption}
\theoremstyle{remark}

\usepackage[normalem]{ulem}

\usepackage{amsfonts}
\usepackage{xcolor}
\usepackage{natbib}
\usepackage{enumitem}
\usepackage{xcolor}
\usepackage{multirow}

\newcommand{\sign}{\mathop{\mathrm{sign}}}

\DeclareMathOperator*{\argmax}{arg\,max}

\newcommand{\E}{\mathbb{E}}
\renewcommand{\P}{\mathbb{P}}
\newcommand{\Var}{\mathrm{Var}}
\newcommand{\bV}{\mathbf{V}}
\newcommand{\bSigma}{\boldsymbol{\Sigma}}
\newcommand{\bpi}{\boldsymbol{\pi}}
\newcommand{\boldpsi}{\boldsymbol{\psi}}
\newcommand{\boldeta}{\boldsymbol{\eta}}
\newcommand{\bT}{\boldsymbol{T}}
\newcommand{\bu}{\boldsymbol{u}}
\newcommand{\bg}{\boldsymbol{g}}
\newcommand{\bw}{\boldsymbol{w}}
\newcommand{\bmu}{\boldsymbol{\mu}}

\newcommand{\acc}{\widehat{\mathrm{Acc}}}

\newcommand{\method}{\textsc{FUSE}}

\usepackage[textsize=tiny]{todonotes}

\icmltitlerunning{FUSE: Ensembling Verifiers with Zero Labeled Data}

\begin{document}

\twocolumn[
  \icmltitle{FUSE: Ensembling Verifiers with Zero Labeled Data}



  \icmlsetsymbol{equal}{*}

  \begin{icmlauthorlist}
    \icmlauthor{Joonhyuk Lee}{equal,stanford}
    \icmlauthor{Virginia Ma}{equal,stanford}
    \icmlauthor{Sarah Zhao}{equal,stanford}
    \icmlauthor{Yash Nair}{stanford}
    \icmlauthor{Asher Spector}{stanford}
    \icmlauthor{Regev Cohen}{google}
    \icmlauthor{Emmanuel Cand\`es}{stanford}
  \end{icmlauthorlist}

  \icmlaffiliation{stanford}{Stanford University}
  \icmlaffiliation{google}{Google}

  \icmlcorrespondingauthor{Joonhyuk Lee}{joonhyuk@stanford.edu}

  \icmlkeywords{verification, test-time scaling, unsupervised ensembling, repeated sampling}

  \vskip 0.3in
]



\printAffiliationsAndNotice{\icmlEqualContribution}  

\begin{abstract}
Verification of model outputs is rapidly emerging as a key primitive for both training and real-world deployment of large language models (LLMs). In practice, this often involves using imperfect LLM judges and reward models since ground truth acquisition can be time-consuming and expensive. We introduce Fully Unsupervised Score Ensembling (\method), a method for improving verification quality by ensembling verifiers without access to ground truth correctness labels.
The key idea behind \method{} is to control conditional dependencies between verifiers in a manner that improves the unsupervised performance of a class of spectral algorithms from the ensembling literature.
Despite requiring zero ground truth labels, FUSE typically matches or improves upon semi-supervised alternatives in test-time scaling experiments with diverse sets of generator models, verifiers, and benchmarks. In particular, we validate our method on both conventional academic benchmarks such as GPQA Diamond and on frontier, unsaturated benchmarks such as Humanity's Last Exam and IMO Shortlist questions.
\end{abstract}

\section{Introduction}\label{sec:intro}
\subsection{Motivation}
Significant recent progress in the performance of large language models (LLMs) has been driven by \emph{test-time scaling}. A key ingredient in many test-time scaling approaches is Best-of-N (BoN) sampling, in which N responses are sampled independently for any given query, and scores from a \emph{verifier} are used to select a single response to return \citep[e.g.,][]{lightman2023let,cobbe2021training,nakano2021webgpt,sun2024fast,rakhsha2025majority,di2025best}. Such schemes, for instance, have been credited as a major input for the gold-level performance of frontier LLMs at the International Math Olympiad \citep{luong-etal-2025-towards}. The verifiers used in BoN sampling are typically external language models or reward models. While imperfect, such models do not incur the time and monetary costs of acquiring ground-truth labels from human experts. Notably, this practical constraint extends beyond test-time scaling alone---the pipeline of repeated sampling followed by imperfect verification has also embedded itself into numerous techniques for \textit{training} language models, such as reinforcement learning with rubric-based rewards \citep{gunjal2025rubricsrewardsreinforcementlearning} and synthetic data selection \citep{liu2024statistical}. 

With the imperfect nature of practical verifiers in mind, recent work has proposed \emph{scaling verification} \citep{zhao2025sample,saad2025shrinking}, wherein scores for responses are computed by aggregating the outputs of multiple verifiers. In the simplest case, this aggregation involves taking an average or majority-vote across verifiers \citep{verga2024replacing,lifshitz2025multi}. The performance of such rules depends critically on the quality of the verifiers at hand. As noted by \citet{saad2025shrinking}, when verifiers exhibit disparate performance, average- and majority-vote-based schemes may perform poorly because they treat scores from unreliable verifiers on par with those given by their stronger counterparts. Further complicating matters is that verifier strength can be query-dependent. These nuances are difficult to adapt to, as by the previously cited constraints of money and time, one often has \emph{zero} ground-truth labels of response correctness for any given query.
\subsection{FUSE}
To address the inherent challenges in constructing an ensemble of verifiers in the absence of ground-truth labels, we develop Fully Unsupervised Score Ensembling (\method), a method which aggregates scores from multiple verifiers in a data-adaptive manner to select the most promising LLM-generated response at inference-time. Because \method~learns how to effectively ensemble verifiers from data, it generally performs better than (sometimes, significantly so) simple unweighted baselines. Despite using zero labeled data, \method{} also performs competitively with prior methods that do assume access to labeled data, and sometimes even outperforms them. Figure~\ref{fig:fuve-vs-baselines} summarizes these findings in the context of Best-of-N test-time scaling experiments on two datasets considered in \citet{saad2025shrinking} and the IMO Shortlist subset of Google's IMO AnswerBench \citep{luong-etal-2025-towards}. 

\begin{figure*}[]
    \centering
    \includegraphics[width=\textwidth]{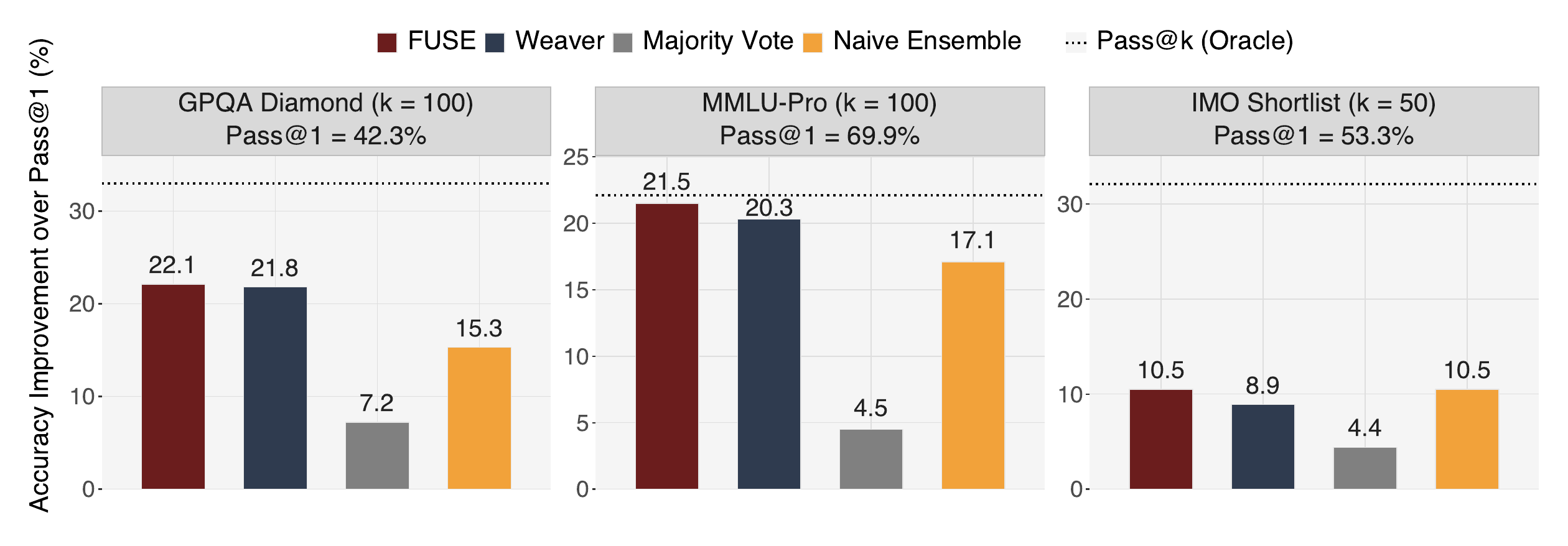}
    \caption{BoN accuracy of our method versus that of a leading semi-supervised alternative (WEAVER, by \citet{saad2025shrinking}) and unsupervised baselines of naive ensemble and majority vote. All bars are re-scaled to depict improvement over Pass@1, which is the accuracy of a random selection rule. The black dotted Pass@k line denotes the maximum possible accuracy improvement for any selection method. Despite being unsupervised, \method{} is competitive with WEAVER and decisively beats naive ensemble and majority vote in all settings except the IMO Shortlist, where characteristics of our verifier set imply that a naive ensemble is effectively an oracle ensemble (see Section \ref{sec:imo-shortlist} for details).}
    \label{fig:fuve-vs-baselines}
\end{figure*}
On the methodological side, \method~builds on the statistical literature on crowd-sourcing and unsupervised ensemble learning, in particular on the works of \citet{nadler2014,jnadler15}. The methods proposed in these papers, which consider unsupervised ensemble learning for binary classification, impose various conditional independence assumptions between classifiers. A key idea behind \method~is to mitigate the impact of violations of some of these assumptions, which we do not expect to hold in general for LLM verifiers.

Two building blocks, visualized in Figure~\ref{method-fig}, underlie FUSE:
\begin{enumerate}
    \item Under an assumption which we refer to as \emph{triplet} conditional independence (TCI), a spectral algorithm in \citet{jnadler15} yields consistent estimates of verifier quality. Rather than assuming TCI, we introduce an empirical measure of TCI violations that can be computed without labels. Using this statistic, \method{} learns a transformation of raw verifier scores (Step 1 in Figure~\ref{method-fig}) for which applying the algorithm of \citet{jnadler15} will yield reliable estimates (Figure~\ref{method-fig}, Step 2). 
    \item With estimates of verifier quality in hand, we form pseudo-labels that can be used to optimize arbitrary parametrized rules for aggregating verifier scores (e.g. logistic regression) as visualized in Step 3 of Figure~\ref{method-fig}. Our construction and subsequent use of pseudo-labels allows us to avoid assuming \emph{joint} conditional independence (JCI)---a common assumption used in the crowd-sourcing literature to ensemble classifiers---which is empirically untestable and unlikely to hold in practice.
\end{enumerate}

In summary, \method{} adaptively transforms verifier scores to better satisfy a weaker TCI assumption while bypassing the stronger JCI assumption. By extending \citet{jnadler15}, we also ensure that all steps in this procedure are compatible with both real and discrete-valued verifiers. Before providing further detail, we first fix notation and give a clear problem statement. 

\begin{figure*}[h]
    \centering
    \includegraphics[scale=0.679]{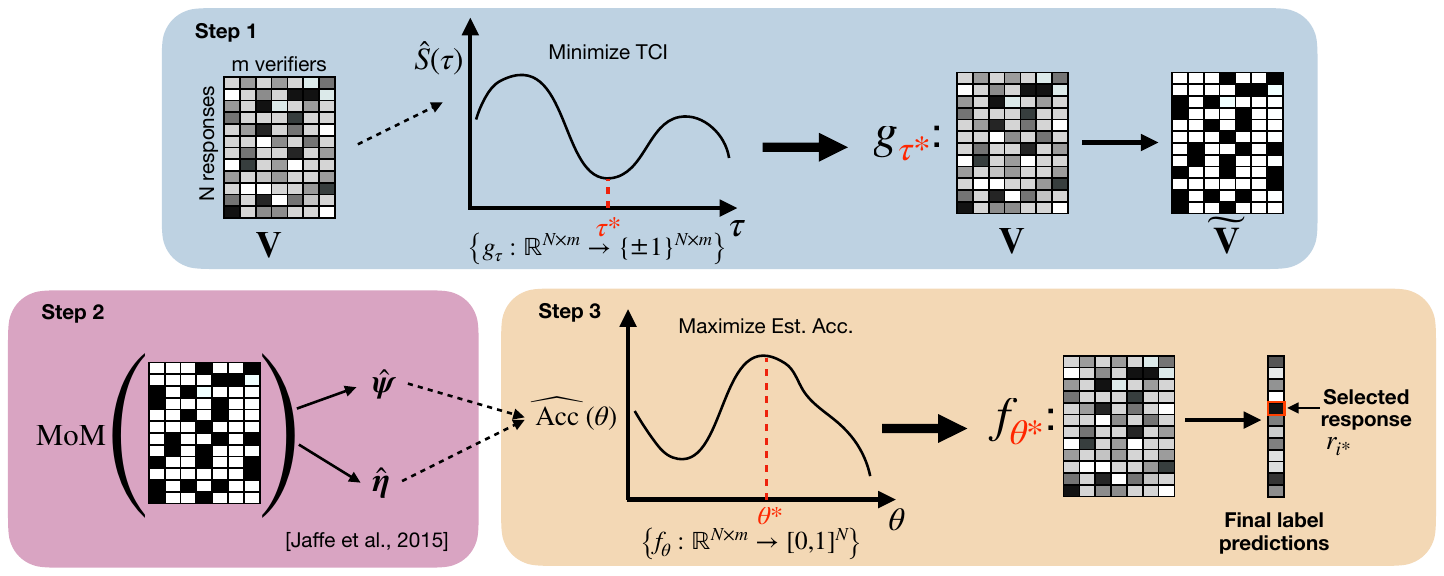}
    \caption{Overview of \method: given the matrix of verifier scores $\bV$ for query $q$, it first finds a transformation $g_{\tau^*}$ that minimizes an empirical measure of TCI violation and transforms scores according to it (\textbf{Step 1}). It then uses the moment-based method of \citet{jnadler15} to produce estimates of the query-specific sensitivities and specifities $\hat{\boldpsi}, \hat{\boldeta}$ (\textbf{Step 2}). Finally, \method~uses these estimates to construct an estimated accuracy of any predictor $f_\theta: \mathbb{R}^{N \times m} \rightarrow [0,1]^N$, optimizing this metric across the parametric family $\{f_\theta\}$ ensembles to obtain a final ensemble $f_{\theta^\star}$ and returning the response with highest predicted correctness under $f_{\theta^\star}$ (\textbf{Step 3}).}
    \label{method-fig}
\end{figure*}

\subsection{Problem statement}
Given a query $q$, we assume access to $N$ LLM-generated responses $(r_{1},\ldots, r_{N})$ and a collection of $m$ verifiers $v_1, \ldots, v_m$ that assign correctness scores to each query-response pair. We treat the query as fixed (i.e., non-random), while $(r_{1},\ldots, r_{N})$ are sampled i.i.d.~from $G(q)$, the distribution of the generator model's responses given the prompt $q$. We let $y_{i} := y(q,r_{i}) \in \{\pm 1\}$ denote the ground-truth correctness label of the $i$th response for the query: $y_{i}=1$ if and only if $r_{i}$ is correct for $q$.
Finally, letting $v_{i,j} := v_j(q,r_{i})$ be the score assigned by the $j$th verifier to response $r_i$, we write $\bV\coloneq (v_{i,j})_{i,j}$ for the $N\times m$ matrix of verifier scores, and $\bV_{i\bullet}$ for the $i$-th row of $\bV$.

Our goal is to select, using the score matrix $\bV$ alone, a response $r_{i^\star}$ for which $y(q,r_{i^\star}) = 1$. When multiple queries $q_1,\ldots,q_L$ exist, our goal will be to select a correct response for each question. In the latter case, we may pool information from different score matrices $\bV_1,\ldots, \bV_L$ (`batching') or only use $\bV_\ell$ to select responses for $q_\ell$ (`query-conditional'). To minimize notational overhead, we restrict our attention to the single-query case in subsequent sections.

\section{Fully unsupervised score ensembling}\label{sec:main-method}
\subsection{MoM estimation of query-specific verifier qualities}\label{sec:jnadler-MoM}
The first step in \method{} is to estimate each verifier's accuracy conditioned on the query $q$ using $\bV$. When verifiers satisfy strong independence assumptions, we can do so by adapting a method-of-moments (MoM) approach developed by \citet{jnadler15} in the context of binary classification. In this section, we provide necessary background for \method{} by explaining this approach, and empirically justify the need to look beyond it. For ease of exposition, we temporarily assume, as in \citet{jnadler15}, that verifiers output binary $\{\pm 1\}$-valued predictions. We later relax this assumption, allowing \method{} to work with arbitrary combinations of discrete and real-valued verifiers.

Because all outputs are binary, the quality of any given verifier $v_j$'s predictions on a response $r$ for a query $q$ is determined by its sensitivity $\psi_{j}$ and specificity $\eta_{j}$: \begin{align*}\psi_{j} &:= \P_{r \sim G(q)}(v_j(q,r)=1\mid y(q,r)=1) \\ \eta_{j} &:= \P_{r \sim G(q)}(v_j(q,r)=-1\mid y(q,r)=-1)\end{align*} where, as a reminder, $y(q,r)$ is the ground-truth correctness label of the response for the query. \citet{jnadler15} propose using the score matrix $\bV$ to estimate the verifier sensitivities $\boldpsi := (\psi_{j})_{j=1}^m$ and specificities $\boldeta :=(\eta_{j})_{j=1}^m$ under the following two assumptions:

\begin{assumption}[Majority of classifiers are better than random]\label{ass:maj-good}
    Let $\pi_{j} := \frac{\psi_{j}+\eta_{j}}{2}$ denote the \emph{balanced accuracy} of the $j$th verifier and $\bpi := (\pi_{1},\ldots,\pi_{m})$ denote the vector of balanced accuracies for query $q$. Then, more than $\frac{m}{2}$ of the values $\{\pi_{j}\}_{j=1}^m$ are larger than $1/2$.
\end{assumption}

\begin{assumption}[Triplet conditional independence]\label{ass:tci}
    Letting $r \sim G(q)$, each triplet of verifiers produces conditionally independent scores given the ground-truth label $y(q,r)$ (see Appendix~\ref{app:tci} for a precise statement).
\end{assumption}

Conditions like Assumption~\ref{ass:maj-good} are common in the crowd-sourcing and ensemble learning literature \citep[c.f., e.g.,][]{dawid1979maximum,kleindessner2018crowdsourcing,shaham2016deep,didwania2022unsupervised,ahsen2019unsupervised} and ensure that the problem is identified in the absence of ground-truth labels. In practice, we use certain heuristics---described in Appendix \ref{app:implementation-details}---to drop verifiers that do not appear to be better than random so as to better ensure Assumption~\ref{ass:maj-good} is satisfied. Conditional independence assumptions like Assumption~\ref{ass:tci} are also common in these literatures \citep{dawid1979maximum,nadler2014,nadler15,jnadler15}, and also ensure identification. Under these two assumptions, \citet{jnadler15} show that verifier sensitivities and specificities can be extracted from a rank-one structure that arises in certain covariance tensors, as summarized by the following:

\begin{theorem}[{\citet[][Lemmas 1, 2, and 4]{jnadler15}}]\label{thm:jaffe}
    Let $\bmu$ denote the vector of mean values of verifier predictions for $q$ and $\bSigma$ and $\bT$ denote the second and third order marginal covariance tensors between verifier outputs for query $q$; i.e., $\bmu \in \mathbb{R}^m, \bSigma \in \mathbb{R}^{m \times m}, \bT \in \mathbb{R}^{m \times m \times m}$ with entries given by $\mu_{j_1} := \E[v_{j_1}(q,r)]$, \begin{align*} \bSigma_{j_1,j_2} = \E\left[\prod_{\ell=1}^2(v_{j_\ell}(q,r)-\E[v_{j_\ell}(q,r)])\right], \\\bT_{j_1,j_2,j_3} = \E\left[\prod_{\ell=1}^3(v_{j_\ell}(q,r)-\E[v_{j_\ell}(q,r)])\right],\end{align*} for all $j_1, j_2, j_3 \in [m]$. Letting $b := \P(y(q,r)=1)-\P(y(q,r)=-1)$ denote the class imbalance for $q$, under Assumptions~\ref{ass:maj-good} and \ref{ass:tci}:
    \begin{enumerate}[label=(\roman*)]
        \item The off-diagonal entries of $\bSigma$ are equal to those of the outer product matrix $\bu\bu^\top$, where \begin{equation}\label{u-identification}\bu := \sqrt{1-b^2}(2\bpi-1).\end{equation} Consequently, $\bu$ is identifiable from $\bSigma$ up to a sign, which is uniquely determined by Assumption~\ref{ass:maj-good}.
        \item The off-diagonal entries of $\bT$ (i.e., those with all three indices distinct) are equal to those of the rank-one third order tensor $\bw \otimes \bw \otimes \bw$ where \begin{equation}\label{w-identification}\bw := (-2b(1-b^2))^{1/3}(2\bpi-1).\end{equation}
        \item The sensitivities and specificities of verifier outputs for responses to query $q$ can be written as: \begin{equation}\label{sens/spec-identification}\begin{split}\boldpsi = \frac{1}{2}\left(1 + \bmu + \bu\sqrt{\frac{1-b}{1+b}}\right), \\ \boldeta = \frac{1}{2}\left(1 - \bmu + \bu\sqrt{\frac{1-b}{1+b}}\right).\end{split}\end{equation}
    \end{enumerate}
\end{theorem}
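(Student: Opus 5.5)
The plan is to condition on the label $y=y(q,r)$ and compute every moment explicitly, using Assumption~\ref{ass:tci} to factor the joint conditional moments of pairs and triples of distinct verifiers. Write $p:=\P(y=1)=(1+b)/2$ and $1-p=(1-b)/2$. For a single verifier with $\{\pm1\}$ output we have $\E[v_j\mid y=1]=2\psi_j-1=:a_j^+$ and $\E[v_j\mid y=-1]=1-2\eta_j=:a_j^-$. Hence
\[
\mu_j=p\,a_j^+ + (1-p)\,a_j^-,
\qquad
a_j^+-a_j^-=2(\psi_j+\eta_j)-2=2(2\pi_j-1).
\]
Define the centered conditional means $c_j^{\pm}:=a_j^{\pm}-\mu_j$. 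A direct computation gives $c_j^+=(1-p)(a_j^+-a_j^-)$ and $c_j^-=-p(a_j^+-a_j^-)$.

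For (i), take $j_1\neq j_2$. Triplet (hence pairwise) conditional independence gives $\E[(v_{j_1}-\mu_{j_1})(v_{j_2}-\mu_{j_2})\mid y]=c_{j_1}^{y}c_{j_2}^{y}$. Averaging over $y$,
\[
\bSigma_{j_1j_2}=\bigl[p(1-p)^2+(1-p)p^2\bigr]\,4(2\pi_{j_1}-1)(2\pi_{j_2}-1)=4p(1-p)(2\pi_{j_1}-1)(2\pi_{j_2}-1).
\]
Since $4p(1-p)=1-b^2$, this equals $u_{j_1}u_{j_2}$ with $\bu$ as in \eqref{u-identification}.

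For identifiability (assuming $m\ge3$ and at least three nonzero entries of $\bu$, which is implicit), note that $u_j^2=\bSigma_{jk}\bSigma_{jl}/\bSigma_{kl}$ for distinct $j,k,l$ with $\bSigma_{kl}\neq0$. This determines $|u_j|$, and the off-diagonal signs then fix $\bu$ up to a global sign. Assumption~\ref{ass:maj-good} says more than half of the entries $2\pi_j-1$ are positive, so it selects the sign uniquely. Making this degenerate-case bookkeeping precise is the only delicate point; I would state it under the standing nondegeneracy $0<\pi_j\neq1/2$ for enough $j$ and $|b|<1$.

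For (ii), take distinct $j_1,j_2,j_3$. Assumption~\ref{ass:tci} gives the third centered conditional moment as $\prod_\ell c_{j_\ell}^{y}$. Averaging over $y$,
\[
\bT_{j_1j_2j_3}=8\prod_\ell(2\pi_{j_\ell}-1)\,\bigl[p(1-p)^3-(1-p)p^3\bigr].
\]
The bracket equals $p(1-p)(1-2p)=\tfrac{1-b^2}{4}\cdot(-b)$, so $\bT_{j_1j_2j_3}=-2b(1-b^2)\prod_\ell(2\pi_{j_\ell}-1)=\prod_\ell w_{j_\ell}$ with $\bw$ as in \eqref{w-identification}, using the real cube root.

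For (iii), solve the two linear relations $\mu_j=p\,a_j^++(1-p)\,a_j^-$ and $a_j^+-a_j^-=2(2\pi_j-1)$. This gives $a_j^+=\mu_j+(1-p)\cdot2(2\pi_j-1)=\mu_j+(1-b)(2\pi_j-1)$. Substituting $2\pi_j-1=u_j/\sqrt{1-b^2}$ gives $(1-b)(2\pi_j-1)=u_j\sqrt{(1-b)/(1+b)}$, and therefore $\psi_j=(1+a_j^+)/2$ matches \eqref{sens/spec-identification}.

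Similarly, $a_j^-=\mu_j-(1+b)(2\pi_j-1)=\mu_j-u_j\sqrt{(1+b)/(1-b)}$ and $\eta_j=(1-a_j^-)/2$. This yields
\[
\eta_j=\tfrac12\Bigl(1-\mu_j+u_j\sqrt{\tfrac{1+b}{1-b}}\Bigr),
\]
whose square-root factor is $\sqrt{(1+b)/(1-b)}$, the reciprocal of the factor printed in \eqref{sens/spec-identification}. I would double-check this against the source, since the stated $\boldeta$ formula appears to carry the reciprocal factor as a typo.

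The main obstacle is not the algebra but the sign/identifiability argument in (i). One must show rank-one off-diagonal structure pins down $\bu$ up to sign, which requires at least three verifiers with $\pi_j\neq1/2$. This can be handled with the ratio formula above.
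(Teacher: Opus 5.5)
Your proof is correct and follows the same route as the paper's appendix argument for (i): condition on $y$, factor the pair and triple moments via TCI, and compute directly. You carry this through to (ii) and (iii), where the paper itself simply defers to \citet{jnadler15}. Your flag on (iii) is also right: a perfect verifier ($\psi_j=\eta_j=1$, hence $\mu_j=b$, $u_j=\sqrt{1-b^2}$) gets $\eta_j=1-b$ from the printed formula, whereas the correct expression is $\eta_j=\tfrac12\bigl(1-\mu_j+u_j\sqrt{(1+b)/(1-b)}\bigr)$.
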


Theorem~\ref{thm:jaffe} shows how the sensitivities $\boldpsi$ and specificities $\boldeta$ are uniquely determined by the marginal moment tensors $\bmu, \bSigma, \bT$. In particular, parts (i) and (ii) of Theorem~\ref{thm:jaffe} show how to recover the vectors $\bu$ and $\bw$ in equations~\eqref{u-identification} and~\eqref{w-identification} from the second and third order covariance tensors. Because these vectors differ only by a scale factor depending on $b$, this in turn recovers the class imbalance $b$. Finally, per equation~\eqref{sens/spec-identification}, the sensitivities and specificities can be extracted on the basis of $\bu, b$, and the mean vector $\bmu$. Of course, the moment tensors $\bmu, \bSigma, \bT$ are unknown, but their empirical counterparts can be computed from $\bV$ and are consistent at $O_p(1/\sqrt{N})$ rates. Consequently, Theorem \ref{thm:jaffe} provides a recipe for producing sensitivity and specificity estimates $\hat{\boldpsi},\hat{\boldeta}$ with zero ground-truth data.

With such estimates in hand, under the additional assumption that verifiers are jointly conditionally independent, \citet{jnadler15} obtain closed-form coefficients for a near-optimal weighted ensemble (see Appendix~\ref{app:jaffe-ensemble} for details). Unfortunately, this ensemble does poorly in practice. As depicted in Figure \ref{fig:ours-vs-nadler}, on BoN test-time scaling data from \citet{weaver}, the \citet{jnadler15} approach under-performs a simple naive ensemble in 7 out of 10 settings, suggesting that its assumptions are untenable in realistic LLM verification settings. See Figure \ref{fig:weaver-corr} for conditional correlation plots on \citet{saad2025shrinking} data that confirm this.

\begin{figure*}[t]
    \noindent\makebox[\textwidth][c]{%
        \includegraphics[width=\textwidth]{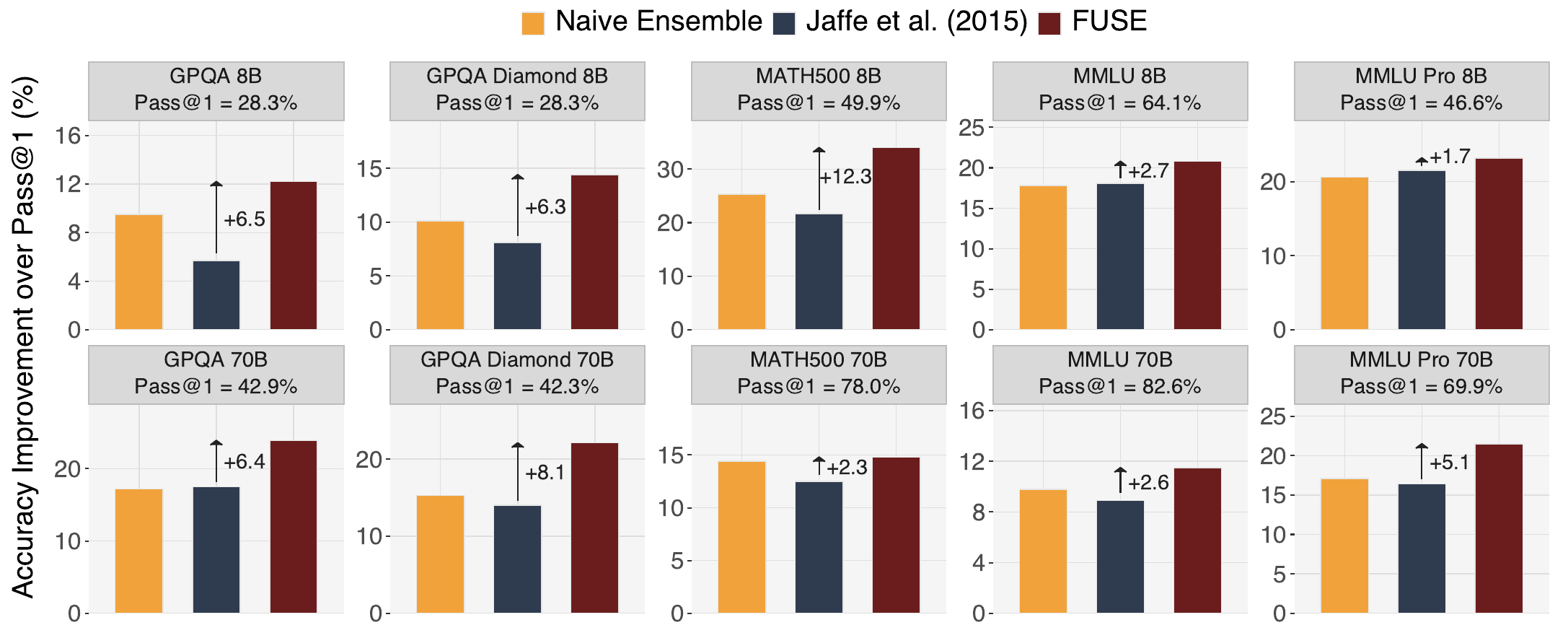}
    }
    \caption{Accuracy of \citet{jnadler15} versus a naive ensemble and \method{} for response selection on data from \citet{saad2025shrinking}, in which generator models are Llama 3.3 8B Instruct and Llama 3.3 70B Instruct. All bars are re-scaled to indicate improvement over Pass@1. The black arrow and accompanying number indicates the accuracy gain of \method{} over \citet{jnadler15}.}
    \label{fig:ours-vs-nadler}
\end{figure*}

\subsection{Adaptive score transformations for enhanced MoM estimation}\label{sec:estimate-sens-spec}

To adapt the approach in the previous section to a more general setting in which verifiers may be conditionally dependent and emit real-valued scores, we establish:

\begin{proposition}\label{proposition}
    Given a set of verifiers, let $\bSigma$ and $\bT$ denote the second and third order covariance tensors associated with query $q$. Then, under the TCI condition stated in Assumption~\ref{ass:tci}, 
    \begin{equation}\label{eq:tci-stat}\sum_{j_3=3}^m \Var\left(\left(\frac{\bT_{j_1,j_2,j_3}}{\bSigma_{j_1,j_2}}\right)_{1 \leq j_1 < j_2 < j_3}\right) = 0,\end{equation} where $\Var(\cdot)$ denotes the sample variance over the indexed collection.
\end{proposition}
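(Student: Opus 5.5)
The plan is to show that, under Assumption~\ref{ass:tci}, for each fixed $j_3$ the ratio $\bT_{j_1,j_2,j_3}/\bSigma_{j_1,j_2}$ does not depend on the pair $(j_1,j_2)$ with $j_1<j_2<j_3$. Then each inner sample variance is zero, and so is the sum over $j_3$. For $j_3=3$ the collection has a single element, and its sample variance is zero by convention.

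First I would redo the computation behind Theorem~\ref{thm:jaffe} for general real-valued scores rather than $\{\pm1\}$ outputs. Write $p=\P(y=1)$ and $\delta_j := \E[v_j\mid y=1]-\E[v_j\mid y=-1]$. Conditioning on $y$ and using pairwise conditional independence (implied by TCI), the law of total covariance gives, for $j_1\neq j_2$,
\[
\bSigma_{j_1,j_2} = \mathrm{Cov}\bigl(\E[v_{j_1}\mid y],\E[v_{j_2}\mid y]\bigr) = p(1-p)\,\delta_{j_1}\delta_{j_2}.
\]
The conditional covariance term vanishes by conditional independence.

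For three distinct indices, write $v_j-\mu_j = (v_j-\E[v_j\mid y]) + (\E[v_j\mid y]-\mu_j)$ and expand the product of three such sums. Conditional on $y$, every term containing at least one centered residual factorizes by TCI and therefore has zero mean. The only survivor is the third central moment of the two-point variables $\E[v_j\mid y]$:
\[
\bT_{j_1,j_2,j_3} = p(1-p)(1-2p)\,\delta_{j_1}\delta_{j_2}\delta_{j_3}.
\]
Plugging in $\{\pm1\}$ scores recovers the $\bu$ and $\bw$ of Theorem~\ref{thm:jaffe}, which serves as a consistency check.

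Taking the ratio then gives $\bT_{j_1,j_2,j_3}/\bSigma_{j_1,j_2} = (1-2p)\,\delta_{j_3}$. This depends only on $j_3$, so for each fixed $j_3$ the collection over pairs $j_1<j_2<j_3$ is constant, its sample variance is zero, and summing over $j_3$ yields \eqref{eq:tci-stat}.

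The main obstacle is well-definedness: the ratio requires $\bSigma_{j_1,j_2}\neq 0$, that is, $p\in(0,1)$ and $\delta_j\neq0$. I would state this as a nondegeneracy condition. In the binary case it follows from Assumption~\ref{ass:maj-good} together with $\pi_j\neq 1/2$, since $\delta_j = 2(2\pi_j-1)$ there. Alternatively, the sum can be restricted to pairs with nonzero $\bSigma_{j_1,j_2}$, which leaves the argument unchanged. A secondary care point is making explicit that TCI means conditional independence of every triple, which is what kills the mixed terms in the third-moment expansion.
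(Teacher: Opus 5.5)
Your proposal is correct and is essentially the paper's argument. The paper notes that the proposition is directly implied by Theorem~\ref{thm:jaffe} and its real-valued extension in the appendix: the off-diagonal entries are $u_{j_1}u_{j_2}$ and $w_{j_1}w_{j_2}w_{j_3}$ with $\bw\propto\bu$, so the ratio is $-2b(2\pi_{j_3}-1)$, which is exactly your $(1-2p)\delta_{j_3}$. You simply re-derive those moment identities yourself via the conditional-mean decomposition.

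One small correction to your nondegeneracy remark: $\bSigma_{j_1,j_2}\neq 0$ needs $p\in(0,1)$ and $\pi_j\neq 1/2$. Assumption~\ref{ass:maj-good} plays no role in guaranteeing it.
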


Proposition~\ref{proposition} provides a measure of violations of TCI that does not require ground-truth labels: a large value of the left-hand side of~\eqref{eq:tci-stat} certifies that TCI does not hold. In our setting, we use this statistic to measure how well transformations of verifiers satisfy TCI. Specifically, let $g_j: \mathbb{R} \rightarrow \mathbb{R}$ denote any monotone transformation of the $j$th verifier's outputs. Letting 
$\bg(\bV) := \left(g_j(v_{i,j})\right)_{i,j}$ denote the matrix of transformations of the original scores $\bV$, we use $\bg(\bV)$ to construct empirical estimates of the second- and third-order population covariance tensors between the binarized verifiers $g_1(v_1(\cdot)), \ldots, g_m(v_m(\cdot))$. Plugging these estimates into~\eqref{eq:tci-stat} yields an empirical approximation of~\eqref{eq:tci-stat}---which we denote by $\hat{\mathcal{S}}(\bg(\bV))$---that provides a (feasible) measure of deviations from TCI among the transformed verifiers.\footnote{In practice, we clip the elements in the denominator of (\ref{eq:tci-stat}) to a small positive number to promote stability.}

\textbf{Binary transformation} By default, we transform verifiers by binarizing verifier outputs to $\{\pm1\}$ using a vector $\tau$ of verifier-specific thresholds: $g_{j,\tau_j}(v_j(\cdot)) = \sign\left(v_j(\cdot)-\tau_j\right)$. We then find $\tau^\star$ which (approximately) minimizes $\hat{\mathcal{S}}(\tau) := \hat{\mathcal{S}}(\bg_\tau(\bV))$ via coordinate descent and apply the MoM procedure from the previous section on the matrix $\bg_\tau(\bV)$ to estimate the sensitivities and specificities of the $\tau^\star$-thresholded verifiers $g_{1,\tau^\star_1}(v_1(\cdot)),\ldots,g_{m,\tau^\star_m}(v_m(\cdot))$.\footnote{For faster optimization, gradient descent can be applied by approximating the sign function with a sigmoid.} Additional examples of transformations are given in Appendix \ref{app:implementation-details}.

We write $\tilde{\bV}$ to denote the transformed matrix of verifier scores (i.e., $\tilde{\bV} = \bg_{\tau^\star}(\bV)$). Correspondingly, $\tilde{v}_j$ denotes the transformed $j$th verifier (i.e., $\tilde{v}_j(q,r) = g_{j,\tau^\star_j}(v_j(q,_r))$). Applying the MoM procedure of Section \ref{sec:jnadler-MoM}
to the matrix $\tilde{\bV}$ then yields estimates $\hat{\boldpsi},\hat{\boldeta}$ of the $q$-conditional sensitivities and specificities of the transformed verifiers $\tilde{v}_1, \ldots, \tilde{v}_m$.

\subsection{Ensemble construction}\label{sec:construct}
The final step of \method~is to use the estimated verifier sensitivities and specificities to construct an ensemble. We will again depart from \citet{jnadler15} who impose the stronger JCI assumption to construct an ensemble. Rather, we will use the estimates $\hat{\boldpsi}, \hat{\boldeta}$ to measure the quality of any given ensemble and subsequently optimize this estimated quality measure in a manner that is not tied to JCI. To illustrate the idea, temporarily suppose that the posterior label probability \begin{equation}\label{eq:post-label}p^\star(r) := \P(y(q,r)=1\mid \tilde{v}_{1}(q,r),\ldots, \tilde{v}_{m}(q,r))\end{equation} for query $q$ were known. Now, consider any family of predictors $f_\theta$ indexed by parameters $\theta$. Concretely, in our experiments we take $f_\theta$ to be the class of logistic regression classifiers operating on the design matrix $\bV$. Which choice of parameters $\theta$ leads to the most accurate predictions of the ground-truth labels? 
We propose to measure the quality of any given $\theta$ using the following objective \begin{equation}\label{eq:infeas-obj}
    \sum_{i=1}^N (2p^\star(r_i)-1)\hat{f}_\theta(\bV_{i \bullet}),
\end{equation} 
where $\hat{f}_\theta(\bV_{i \bullet})$ is the $\{\pm 1\}$-valued prediction of $f_\theta$ given the (original) verifier outputs for the $i$th response (i.e., the $i$th row of $\bV$). Intuitively, parameter values which achieve larger values of this objective correspond to better ensembles. Of course, the objective in~\eqref{eq:infeas-obj} is infeasible to compute since the probabilities $p^\star(r_i)$ are unknown. Therefore \method~substitutes these oracle probabilities with estimates $\hat{p}(r_i)$ derived from the estimated sensitivities/specificities $\hat{\boldpsi},\hat{\boldeta}$---we will soon describe precisely how the estimates $\hat{p}(r_i)$ are computed from $\hat{\boldpsi},\hat{\boldeta}$. In particular, the final ensemble that \method~constructs is given by $f_{\theta^\star}$, where \begin{equation}\label{eq:est-acc}
    \theta^\star :=  \argmax_\theta \sum_{i=1}^N (2\hat{p}(r_i)-1)\hat{f}_\theta(\bV_{i \bullet}) =: \acc(\theta).
\end{equation}  

Once $\theta^\star$ has been computed, \method~selects the final response to generate as the one with the greatest likelihood of being correct as measured by $f_{\theta^\star}$; i.e., $r_{i^\star}$ where $i^\star := \argmax_{i=1}^N f_{\theta^\star}(\bV_{i \bullet})$.

\textbf{Posterior probability estimation} To construct estimates $\hat{p}(r_i)$ of the posterior probabilities, we first estimate the posterior probabilities of the correctness label given any three verifiers. A direct calculation shows that for any three verifiers $\tilde{v}_{j_1}, \tilde{v}_{j_2}, \tilde{v}_{j_3}$, the posterior label probability \begin{equation}\P(y(q,r)=1\mid \tilde{v}_{j_1}(q,r), \tilde{v}_{j_2}(q,r), \tilde{v}_{j_3}(q,r))\label{posterior-triplet-probability}\end{equation} can be expressed, under TCI, in terms of the sensitivities $\psi_{j_1}, \psi_{j_2}, \psi_{j_3}$, specificities $\eta_{j_1}, \eta_{j_2}, \eta_{j_3}$, and class imbalance $b$. The exact form of this relationship is given in Proposition~\ref{prop:posterior-estimate} in Appendix~\ref{app:post}. Plugging in estimates of label imbalance $\hat{b}$ as well as sensitivity and specificity estimates $\hat{\boldpsi}, \hat{\boldeta}$ then results in an estimate of this posterior label probability: we refer to this estimate as $\hat{p}_{j_1,j_2,j_3}(r)$. Unfortunately, $\hat{p}_{j_1,j_2,j_3}(r)$ may in general be a poor estimate of the \emph{full} posterior label probability $p^\star(r)$.
Our default remedy (see Appendix \ref{app:experimental-details} for alternatives) is to average posterior estimates across triplets. That is, we estimate $p^\star(r)$ as: \begin{equation}\label{eq:u-stat}\hat{p}(r) = \frac{1}{\binom{m}{3}}\sum_{1 \leq j_1 < j_2 < j_3 \leq m}\hat{p}_{j_1,j_2,j_3}(r).\end{equation}
See Figure \ref{method-fig} for a high-level visualization of \method{}, and Algorithm~\ref{alg:full-fuse} for pseudocode. Finally, for an extension to the `batched' case in which scores for multiple queries are jointly used to select responses, see Appendix \ref{app:implementation-details}. 

\begin{algorithm}[H]
\caption{Fully Unsupervised Score Ensembling}
\label{alg:full-fuse}
\begin{algorithmic}[1]
\STATE {\bfseries Input:} Score matrix $\bV \in \mathbb{R}^{N \times m}$ of verifier scores for responses $(r_i)_{i =1}^N$ to query $q$; user-defined threshold family $\{g_\tau: \mathbb{R}^{N \times m} \rightarrow \{\pm 1\}^{N \times m}\}$; user-specified ensemble family $\{f_\theta: \mathbb{R}^{N \times m} \rightarrow [0,1]^{N}\}$.
\STATE Compute $\tau^\star \in \arg\min_{\tau \in \mathcal{T}} \hat{\mathcal{S}}\!\left(g_\tau(\bV)\right)$.
\STATE Set $\tilde{\bV} \gets g_{\tau^\star}(\bV) \in \{\pm 1\}^{N \times m}$.
\STATE Apply Theorem~\ref{thm:jaffe} to $\tilde{\bV}$ to obtain $(\hat{\boldpsi},\hat{\boldeta},\hat b)$.
\STATE Define the estimated ensemble accuracy $\widehat{\mathrm{Acc}}(\theta)$ in \eqref{eq:est-acc} using \eqref{eq:u-stat}.
\STATE Compute $\theta^\star \in \arg\max_{\theta \in \Theta} \widehat{\mathrm{Acc}}(\theta)$.
\STATE \textbf{Return} $r_{i^\star}$ where $i^\star \gets \arg\max_{i \in [N]} f_{\theta^\star}(\bV_{i\bullet})$.
\end{algorithmic}
\end{algorithm}

\section{Results}
In this section, we compare the performance of \method{} to that of semi-supervised and unsupervised baselines in the context of BoN test-time scaling. Our basic setup, detailed further in Appendix \ref{app:experimental-details}, is as follows:

\textbf{Baselines} 
We consider three semi-supervised baselines, which are each given access to ground truth labels for 5\% of queries. These are logistic regression, naive Bayes, and WEAVER \citep{weaver}, which uses ideas from the weak supervision literature to estimate ensemble weights.
Unsupervised baselines include majority vote, which selects the most common answer among the repeated generations, and naive ensemble, which selects the response with the highest average verifier score. Finally, we compute several oracle baselines such as Pass@$k$ and the best verifier by ground-truth accuracy. Detailed baseline descriptions are in Appendix \ref{app:baselines}. In Appendix \ref{app:repeated-verification}, we mathematically justify omitting repeated sampling from a single verifier, which is an obvious but non-competitive baseline. Finally, Appendix \ref{sec:additional-unsupervised} compares \method{} to additional unsupervised baselines such as \citet{dawid1979maximum}.
    
\textbf{Models and datasets} We consider three data and model sources.
    \begin{itemize}
        \item \textbf{Data from \citet{weaver}} We run our method on the exact data used in \citet{weaver}, consisting of 100 generations per question by Llama 3.1 8B Instruct and Llama 3.3 70B Instruct on GPQA, GPQA Diamond, MATH500, MMLU, and MMLU Pro, with verifications by up to 33 open-source reward models and language models. 
        \item \textbf{Humanity's Last Exam} A 649-question subsample of Humanity's Last Exam \citep{phan2025humanitysexam}. See Appendix \ref{app:hle} for details on construction. We sample 50 generations per question from Gemini 3 Pro Preview, and verifications from seven closed-source and open-source models (see Appendix \ref{app:hle}).
        \item \textbf{IMO Shortlist} A 123-question subset of IMO AnswerBench \citep{luong-etal-2025-towards} comprised of modified IMO Shortlist questions. We sample 50 generations per question from the open-source model Qwen3-30B-A3B-Thinking-2507, and verifications from 9 open-source language models.
\end{itemize}

An attractive feature is that these settings exhibit vast differences in task difficulty and in the strength of generator and verifier models. Consequently, our method's strong overall performance provides evidence for usefulness across problem ranges. In an additional suite of ablations, we verify that the ability of our method to operate on a prompt-conditional basis improves performance versus baselines in `mixed' settings with task heterogeneity.

\subsection{70B and 8B experiments}\label{sec:weaver-exp}

Table \ref{tab:weaver-results} summarizes the Best-of-100 performance of \method{} and baselines on data from \citet{saad2025shrinking}.

\textbf{\method{} is competitive with semi-supervised baselines} 
In the 70B setting, we see that \method{} achieves essentially exact parity with WEAVER, with minor deviations in performance across benchmarks (for instance, 64.4\% on GPQA Diamond for \method{} vs 64.1\% for WEAVER). Overall, across 70B and 8B settings, \method{} wins 27 out of 40 comparisons against supervised baselines, and always outperforms the natural unsupervised baseline of naive ensemble and majority vote.

\begin{table*}
\caption{Best-of-100 accuracies for all baselines on data from \citet{saad2025shrinking}, wherein generator models are Llama 3.1 8B Instruct and Llama 3.3 70B Instruct. Supervised methods are given access to ground-truth labels for all 100 responses to 5\% of questions (e.g. 2500 labels on MATH500). The OBV (Oracle Best Verifier) column corresponds to the selection accuracy of the verifier with highest balanced accuracy, which may vary from benchmark to benchmark.}
\label{tab:weaver-results}
\centering
\small
\setlength{\tabcolsep}{4.0pt}
\renewcommand{\arraystretch}{1.15}

\begin{tabular}{l *{8}{c} r}
\toprule\toprule

& \multicolumn{4}{c}{\textbf{Unsupervised}}
& \multicolumn{4}{c}{\textbf{Supervised}}
& \multicolumn{1}{c}{\textbf{Oracle}} \\
\cmidrule(lr){2-5}\cmidrule(lr){6-9}\cmidrule(lr){10-10}
\textbf{Benchmark (70B)}
& \textbf{Pass@1} & \textbf{Majority Vote} & \textbf{Naive Ensemble} & \textbf{\method{}}
& \textbf{OBV} & \textbf{Weaver} & \textbf{Logistic} & \textbf{Naive Bayes}
& \textbf{Pass@100} \\
\midrule
GPQA
& 42.9 & 47.4 & 60.1 & \textbf{66.8} & 59.1 & 66.4 & \textbf{69.3} & 60.5 & 81.0 \\
GPQA Diamond
& 42.3 & 49.5 & 57.6 & \textbf{64.4} & 50.8 & \textbf{64.1} & 63.3 & 57.1 & 75.3 \\
MATH500
& 78.0 & 82.4 & 92.4 & \textbf{92.8} & 87.2 & 93.4 & \textbf{96.2} & 94.4 & 98.6 \\
MMLU
& 82.6 & 84.1 & 92.4 & \textbf{94.1} & 88.4 & \textbf{94.9} & 93.1 & 93.7 & 96.0 \\
MMLU Pro
& 69.9 & 74.4 & 87.0 & \textbf{91.4} & 85.6 & 90.2 & \textbf{91.8} & 91.4 & 92.0 \\


\midrule

& \multicolumn{4}{c}{\textbf{Unsupervised}}
& \multicolumn{4}{c}{\textbf{Supervised}}
& \multicolumn{1}{c}{\textbf{Oracle}} \\
\cmidrule(lr){2-5}\cmidrule(lr){6-9}\cmidrule(lr){10-10}
\textbf{Benchmark (8B)}
& \textbf{Pass@1} & \textbf{Majority Vote} & \textbf{Naive Ensemble} & \textbf{\method{}}
& \textbf{OBV} & \textbf{Weaver} & \textbf{Logistic} & \textbf{Naive Bayes}
& \textbf{Pass@100} \\
\midrule
GPQA
& 28.3 & 30.5 & 37.8 & \textbf{40.5} & 41.9 & \textbf{47.1} & 35.9 & 37.6 & 95.2 \\
GPQA Diamond
& 28.3 & 32.3 & 38.4 & \textbf{42.7} & 40.6 & \textbf{46.5} & 34.3 & 37.5 & 95.0 \\
MATH500
& 49.9 & 69.6 & 75.2 & \textbf{83.9} & 82.8 & 74.8 & \textbf{88.0} & 80.8 & 99.2 \\
MMLU
& 64.1 & 72.7 & 81.9 & \textbf{84.9} & 83.6 & \textbf{85.7} & 80.4 & 77.8 & 98.5 \\
MMLU Pro
& 46.6 & 56.4 & 67.2 & \textbf{69.8} & 66.5 & \textbf{67.2} & 65.2 & 66.0 & 96.8 \\

\bottomrule\bottomrule
\end{tabular}
\end{table*}

\textbf{Diverse verification is necessary for strong performance} Across all benchmarks, the verification-free baseline of majority vote is uniformly non-competitive, with gaps in performance versus \method{} that range from $10.0\%$ on 8B GPQA to $17.0\%$ on MMLU Pro. Further, \method{} outperforms the oracle `best verifier' in all but one setting, suggesting that using diverse verifiers produces meaningful benefits.

  \begin{table}[h]
    \caption{Performance of methods in a mixed-data setting with 100 questions each from GPQA, GPQA Diamond, MATH500, MMLU, and MMLU Pro. In the mixed labels setting, 5 questions from each benchmark are held out as a labeled train set. In the GPQA-only setting, 25 questions from GPQA are held out. }
  \label{tab:mixed-ablation}
  \centering
  \setlength{\tabcolsep}{1.pt}
  \begin{tabular}{llcc}
  \toprule
  Setting &Method & 8B acc.& 70B acc. \\
  \midrule
  Mixed Labels &Pass@1 & 40.2\% & 64.0\% \\
  &Pass@100 & 97.1\% & 87.6\% \\
  &Majority Vote & 52.2\% & 67.1\% \\
  &Naive Ensemble & 63.0\% & 79.4\% \\
  &WEAVER & 60.1\% & 78.3\% \\
  &\method{} & \textbf{64.4\%} & \textbf{81.8\%} \\
  \midrule
  GPQA-Only Labels &Pass@1 & 41.1\% & 65.3\% \\
  &Pass@100 & 97.1\% & 88.8\% \\
  &Majority Vote & 53.5\% & 68.0\% \\
  &Naive Ensemble & 64.2\% & 80.8\% \\
  &WEAVER & 58.4\% & 79.6\% \\
  &\method{} & \textbf{65.2\%} & \textbf{82.4}\% \\
  \bottomrule
  \end{tabular}
  \end{table}

\textbf{Prompt-level conditioning bolsters performance in settings with high heterogeneity} 
We conduct a mixed-data ablation to simulate the effects of high task heterogeneity and potential distribution shift; see Table \ref{tab:mixed-ablation}. 
Our dataset consists of the first 100 questions from each of GPQA, GPQA Diamond, MATH500, MMLU, and MMLU Pro. In the `Mixed Labels' setting, semi-supervised methods are given labels from the first 5 questions in each benchmark (5\% total). In the `GPQA-Only' setting, labels are derived exclusively from GPQA. 
We see that \method{} outperforms all baselines in all settings, and that in the 8B setting, transitioning from `Mixed Labels' to `GPQA-Only' widens the gap between \method{} and WEAVER from 4.3\% to 6.8\%.

\subsection{Humanity's Last Exam}

\textbf{\method{} performs well in extremely hard settings} The experimental setup in \citet{weaver} consists of benchmarks which, while standard, are not commensurate with the reasoning quality of recent language models. We assess \method{} and baselines on a curated 649-question subset of Humanity's Last Exam \citep{phan2025humanitysexam}, which is presently unsaturated by frontier closed-source models such as Gemini 3 Pro and GPT 5.2 Pro. In constructing this subset, we exclude questions with zero correct responses so as to make comparisons between selection methods meaningful.

\begin{table}[ht]
\caption{Best-of-50 performance of methods on Humanity's Last Exam. }\label{tab:hle-results}
\centering
\begin{tabular}{lc}
\toprule
\textbf{Method} & \textbf{Accuracy} \\ \midrule
Pass@1 & 52.1\% \\
Naive Ensemble & 51.4\% \\
Oracle Best Verifier (GPT-5.2 High) & 53.5\% \\
Naive Bayes & 52.0\% \\
Logistic Regression & 53.4\% \\
WEAVER & 51.2\% \\\midrule
\textbf{\method{}} & \textbf{54.3\%} \\ \bottomrule
\end{tabular}
\end{table} 

In Table \ref{tab:hle-results}, only the oracle best verifier, semi-supervised logistic regression, and \method{} outperform the Pass@1 baseline. Notably, naive ensemble is \emph{worse} than simply selecting a random response, validating the extreme difficulty of this setting. 

\subsection{IMO Shortlist}\label{sec:imo-shortlist}
In our final main setting, we consider the IMO Shortlist subset of IMO AnswerBench \citep{luong-etal-2025-towards}, which consists of past IMO Shortlist questions that are expert-modified to prevent memorization from training data. This setting, in contrast to the previous ones, features substantial homogeneity in verifier strength and near-conditional independence of verifiers (see Appendix \ref{app:imo}). As such, the naive ensemble baseline is effectively an oracle baseline. We see in Table \ref{tab:imo-shortlist} that \method{} is the only method to match a naive ensemble, while other data-dependent methods fall behind. This setting, besides having highly similar verifiers, also has substantially fewer verifiers than the \citet{saad2025shrinking} experiments (9 vs 33). Therefore, neither a high verifier count nor extreme verifier heterogeneity are required for our method to perform well.
\begin{table}[t]
\caption{Best-of-50 performance of methods on IMO Shortlist}\label{tab:imo-shortlist}
\centering
\begin{tabular}{lc}
\toprule
\textbf{Method} & \textbf{Accuracy} \\ \midrule
Pass@1 & 53.3\% \\
Majority Vote & 57.7\% \\
Naive Ensemble & \textbf{63.8\%} \\
Oracle Best Verifier (gpt-oss-20b) & 59.7\% \\
Naive Bayes & 59.1\% \\
Logistic Regression & 60.2\% \\
WEAVER & 62.1\% \\\midrule
\textbf{\method{}} & \textbf{63.8\%} \\ \bottomrule
\end{tabular}
\end{table}

\section{Related work}

\subsection{Verification for test-time scaling} A large number of works have studied variants of BoN for improved test-time scaling when using a single verifier model \citep[e.g.,][]{cobbe2021training,nakano2021webgpt,ichihara2025evaluation,jinnai2024regularized,rakhsha2025majority}. Like these works, \method{} aims to achieve response accuracy as close as possible to the Pass@$k$ rate \citep{chen2021evaluating}, but differs in that it uses \emph{multiple} verifier models to boost accuracy.

Prior work on leveraging multiple verifiers to improve verification quality includes \citet{verga2024replacing,lifshitz2025multi,weaver}, who propose various unweighted and semi-supervised rules for score aggregation. Such rules are either too simple to account for differing abilities and statistical dependencies between verifiers or rely on holdout sets of labeled data. In contrast, \method{} constructs effective and data-adaptive ensembles with zero access to ground truth labels. Lastly, we remark that in a distinct but related setting, works such as \citet{coste2023reward,eisenstein2023helping} study how ensembling verifiers can reduce reward-hacking in reinforcement learning.

\subsection{Unsupervised ensembling} 

As detailed in Section \ref{sec:main-method}, the core methodological ideas underlying \method{} stem from the literature on ensembling ML models with no access to labeled data \citep{nadler2014,jnadler15,nadler16,nadler15}. These works all require conditional independence assumptions which empirically fail to hold in the LLM verification setting, whereas we adaptively manage such violations so that a \citet{jnadler15}-inspired estimation procedure can be applied.

\section{Discussion}
We introduced \method, a method for unsupervised ensembling of verifiers in settings where one has access to repeated samples from a generator model. Our experiments involve settings which range from conventional (e.g. MMLU) to frontier (e.g. Humanity's Last Exam) in difficulty. Regardless of setting, \method{} competes well with baselines that require ground-truth labels. An attractive feature of our method, which also boosts performance when tasks are diverse, is that \method{} supports both query-conditional and batched modes of operation. 

The primary limitation of our method is its computational and logistical efficiency. As presented, our method is cumbersome in that it demands sampling from distinct verifier models. In \citet{weaver}, distillation of ensemble predictions into a small model is proposed as way to reap the benefits of diverse verification in a FLOP-efficient manner. A similar solution could be implemented in our setting, but any fixed model would be susceptible to distribution shift, whereas the question-conditional version of \method{} is immune by design. 

We expect our ideas to have applicability beyond the test-time scaling setting, and record some promising directions:

\begin{itemize} 
    \item \textbf{Unsupervised model ranking} Robust scoring of responses without ground-truth labels may enable unsupervised model ranking and exciting new forms of benchmarks. For instance, \citet{nie2025uqassessinglanguagemodels} propose benchmarking language models through LLM-based scoring of attempts at unsolved questions. The credibility of this scheme is unavoidably tied to the strength of available scoring protocols, which \method{} and future refinements would boost. 

    \item \textbf{Benchmark and ground-truth auditing} Many benchmarks have label errors which require human experts to identify, for which \method{} could provide diagnostics by isolating inconsistencies between answer key correctness and estimated correctness. Similarly, \method{} could be used to rank \textit{sources} of ground-truth such as expert human labelers on challenging tasks.
    
    \item \textbf{Data filtering} Unsupervised, high-quality scoring of model generations is of potential relevance in any task where data filtering is necessary, such as selection of synthetic training data and reinforcement learning with rubric-based rewards.  
\end{itemize}

For many tasks, model capability and the cost of useful ground-truth labels will inevitably rise in tandem. The most ambitious conceptual response to this marriage of cost and utility is dispensing with supervision altogether.
We do not expect this to be possible in all circumstances and settings, but view \method{} as providing initial validation that the full potential of unsupervised methods lies beyond the present frontier.

\section{Acknowledgments}\label{sec:acknowledgements}

We thank Andrew Ilyas, Sarah Cen, and Zitong Yang for helpful discussions. This work was partially conducted using the Stanford Marlowe GPU cluster \citep{kapfer2025marlowe}. Y.N. and A.S. were both supported by a National Science Foundation Graduate Research Fellowship. A.S. was also supported by the Citadel GQS PhD fund. V.M. was supported by the Stanford Data Science Scholarship. E.J.C. was supported by the Office of Naval Research (grant N00014-24-1-2305) and the National Institutes of Health (grant 1R01AG08950901A1). We gratefully acknowledge the support of Google and Google Cloud.

\section*{Impact Statement}\label{sec:impact}

This paper presents work whose goal is to advance the field of Machine
Learning. There are many potential societal consequences of our work, none
which we feel must be specifically highlighted here.

\bibliography{bib}
\bibliographystyle{icml2026}

\newpage
\appendix
\onecolumn

\section{Further details on MoM estimation of sensitivities and specificities}

In this section, we elaborate on further details of the MoM estimation procedure of \citet{jnadler15}. Section~\ref{app:tci} formalizes the TCI condition in Assumption~\ref{ass:tci} and Section~\ref{app:real-mom} explains how the procedure and surrounding results can be extended to real-valued scores.

\subsection{Triplet conditional independence}\label{app:tci}

Formally, the TCI condition stated in Assumption~\ref{ass:tci} is that, for each triplet of distinct indices $j_1,j_2,j_3 \in [m]$ and every $a_{j_1}, a_{j_2}, a_{j_3},y \in \{\pm 1\}$, we have
\begin{equation}
\begin{split}
\P(v_{j_1}(q,r)=a_{j_1},v_{j_2}(q_k,r)=a_{j_2},v_{j_3}(q_k,r)=a_{j_3} \mid y(q,r)=y)
&= \P(v_{j_1}(q,r)=a_{j_1} \mid y(q,r)=y)\\&\quad\times \P(v_{j_2}(q,r)=a_{j_2} \mid y(q,r)=y) \\&\quad\times
\P(v_{j_3}(q,r)=a_{j_3} \mid y(q,r)=y).
\end{split}
\end{equation}

\subsection{MoM estimates for real-valued score matrices}\label{app:real-mom}
As discussed in Section~\ref{sec:jnadler-MoM}, the MoM estimator of \citet{jnadler15} can be extended to real-valued scores. We operate under the assumption that all verifier scores lie in the interval $[-1,1]$.\footnote{Accordingly, we re-scale verifier scores to lie in $[-1,1]$ before applying \method{}.} Furthermore, we extend the definition of sensitivity and specificity in this setting to be: \[\psi_j := \E\left[\frac{1+v_j(q,r)}{2} \mid y(q,r)=1\right], \; \; \eta_j := \E\left[\frac{1-v_j(q,r)}{2} \mid y(q,r)=-1\right].\] As before, the balanced accuracy is given by $\pi_j := \frac{\psi_j+\eta_j}{2}$. Notice that in the original $\{\pm 1\}$-valued case, these definitions coincide with those presented in the main manuscript. Finally, we require the same TCI condition as in Assumption~\ref{ass:tci}: the (real-valued) scores output by any three distinct verifiers are conditionally independent given the true label. We claim that, under these definitions and conditions, Theorem~\ref{thm:jaffe}---which restates \citet{jnadler15}'s identities relating $\boldpsi$ and $\boldeta$ to the first three moment/covariance tensors in the binary classification setting---continues to hold in this real-valued setup. The reason is that the proofs in \citet{nadler2014,jnadler15} use only the fact that $\psi_j$ and $\eta_j$ are equal to expectations of (affine transformations of) verifier outputs. This relationship is entirely unchanged in our setting, meaning that their proofs continue to go through in our real-valued setting without modification. To illustrate the idea, we provide a proof of the extension of part (i) of Theorem~\ref{thm:jaffe}.

\begin{proof}[Proof of real-valued extension of Theorem~\ref{thm:jaffe} (i)]
    It suffices to verify that, for any $j_1 \neq j_2$, \begin{equation}\label{eq:part1}
        \E[(v_{j_1}(q,r)v_{j_2}(q,r)]-\E[v_{j_1}(q,r)]\E[v_{j_2}(q,r)] = (1-b^2)(2\pi_{j_1}-1)(2\pi_{j_2}-1).
    \end{equation}

    Under TCI, the left-hand side equals
    \begin{multline*}
        \frac{1+b}{2}\cdot(2\psi_{j_1}-1)(2\psi_{j_2}-1) + \frac{1-b}{2}\cdot(2\eta_{j_1}-1)(2\eta_{j_2}-1)\\ - \left[\frac{1+b}{2}\cdot (2\psi_{j_1}-1) - \frac{1-b}{2}\cdot (2\eta_{j_1}-1)\right]\left[\frac{1+b}{2}\cdot (2\psi_{j_2}-1) - \frac{1-b}{2}\cdot (2\eta_{j_2}-1)\right]\\
    \end{multline*}

    A direct calculation shows that this is the same as the right-hand side of \eqref{eq:part1}.
\end{proof}

Finally, we note that Proposition~\ref{proposition} continues to hold in our real-valued setting, as it is directly implied by (the real-valued extension of) Theorem~\ref{thm:jaffe}.

\section{Ensembling under joint conditional independence {\citep{jnadler15}}}\label{app:jaffe-ensemble}
    Suppose that the true sensitivities and specificities $\boldpsi, \boldeta$ are known. Then---again, translating \citet{jnadler15}'s results and setup to our repeated verification setting---given a response $r$ to query $q$, \citet{jnadler15} propose a procedure to estimate the unknown label $y(q,r)$. In particular, under the JCI assumption that the verifier scores $v_j(q,r)$ are \emph{jointly} conditionally independent given the true label $y(q,r)$, they show that the maximum likelihood estimate (MLE) of $y(q,r)$ is given by \begin{equation}\label{eq:jaffe-ensemble-mle}\hat{y} := \sign\left(\sum_{j=1}^m v_j(q,r) \log \left(\frac{\psi_j(1-\psi_j)}{\eta_j(1-\eta_j)}\right) + \log \left(\frac{\psi_j(1-\psi_j)}{\eta_j(1-\eta_j)}\right)\right),\end{equation}
As $\boldpsi$ and $\boldeta$ are unknown in practice, \citet{jnadler15} propose to simply plug in the estimates $\hat{\boldpsi},\hat{\boldeta}$ into \eqref{eq:jaffe-ensemble-mle} to obtain an approximation of the MLE.

\section{Estimation of posterior probabilities}\label{app:post}
In this section, we show how to estimate the posterior label probabilities given any three verifier predictions.

\begin{proposition}\label{prop:posterior-estimate}
For any three indices $j_1, j_2, j_3 \in [m]$, we have that \begin{align}&\P(y(q,r_i)=y\mid v_{i,j_1}, v_{i,j_2}, v_{i,j_3})\propto (1+by)\prod_{\ell=1}^3\left[1-yv_{j_\ell} + v_{j_\ell}\left((1+y) \psi_{j_\ell} - (1-y) \eta_{j_\ell}\right)\right]\label{eq:posterior}\end{align} under the TCI condition (Assumption~\ref{ass:tci}). Consequently, plugging in the estimates of class imbalance, sensitivities, and specificities above leads to a consistent estimate of the posterior probability $\P(y(q,r_i)=y\mid v_{i,j_1}, v_{i,j_2}, v_{i,j_3})$.
\end{proposition}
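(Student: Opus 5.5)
The plan is to prove the proportionality by applying Bayes' rule to the triplet $(v_{j_1},v_{j_2},v_{j_3})$. By Bayes' rule,
\[
\P(y(q,r_i)=y\mid v_{i,j_1},v_{i,j_2},v_{i,j_3}) \;\propto\; \P(y(q,r)=y)\,\P(v_{j_1},v_{j_2},v_{j_3}\mid y),
\]
with proportionality over $y\in\{\pm1\}$. The prior is $\P(y(q,r)=y)=\frac{1+by}{2}$, since $b=\P(y=1)-\P(y=-1)$. This accounts for the factor $(1+by)$ up to a constant.

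Next, invoke the TCI condition (Assumption~\ref{ass:tci}, formalized in Appendix~\ref{app:tci}). It lets the joint conditional likelihood factor as $\prod_{\ell=1}^3 \P(v_{j_\ell}\mid y)$. It then suffices to show that each single-verifier likelihood equals, up to a factor independent of $y$, the bracket
\[
1-yv_{j_\ell}+v_{j_\ell}\bigl((1+y)\psi_{j_\ell}-(1-y)\eta_{j_\ell}\bigr).
\]

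In the binary case this is a four-case check over $(y,v)\in\{\pm1\}^2$.
\begin{itemize}
\item For $y=1$, the bracket becomes $1-v+2v\psi$. This equals $2\psi$ when $v=1$ and $2(1-\psi)$ when $v=-1$, matching $2\P(v\mid y=1)$.
\item For $y=-1$, the bracket becomes $1+v-2v\eta$. This equals $2(1-\eta)$ when $v=1$ and $2\eta$ when $v=-1$, matching $2\P(v\mid y=-1)$.
\end{itemize}
The overall factor $2^3\cdot\frac12$ does not depend on $y$, so it is absorbed into the normalizing constant. This yields \eqref{eq:posterior}.

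For the consistency claim, the right-hand side of \eqref{eq:posterior}, once normalized over $y\in\{\pm1\}$, is a continuous function of $(b,\psi_{j_\ell},\eta_{j_\ell})$. This holds wherever the normalizer is positive, and the normalizer is positive whenever $|b|<1$ and the triplet has positive probability. Theorem~\ref{thm:jaffe} expresses $b$ (through the ratio of $\bw$ to $\bu$), $\bpi$, $\boldpsi$, and $\boldeta$ as continuous functions of the population moments $\bmu,\bSigma,\bT$. Those moments are estimated consistently by their empirical counterparts. Under Assumptions~\ref{ass:maj-good} and~\ref{ass:tci}, the continuous mapping theorem therefore gives $\hat b\to b$, $\hat\boldpsi\to\boldpsi$, and $\hat\boldeta\to\boldeta$ in probability, and another application of continuous mapping gives consistency of the plug-in posterior.

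The main obstacle is the continuity of the moment-to-parameter map. The sign of $\bu$ is determined by Assumption~\ref{ass:maj-good}, which requires some $\pi_j\neq 1/2$ so that $\bu$ is nonzero and the rank-one factor is identifiable. Recovering $b$ from $\bw$ and $\bu$ requires a cube root and a division by $\|\bu\|$, and these are continuous only away from degenerate cases. I would state these nondegeneracy conditions ($|b|<1$ and $\bu\neq0$) explicitly and otherwise cite Theorem~\ref{thm:jaffe}.

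For real-valued scores, the bracket is affine in $v$. With the extended definitions of $\psi_j$ and $\eta_j$ from Appendix~\ref{app:real-mom}, the computation above is exact for binary outputs. For general real-valued outputs I would present the bracket as the model-based likelihood, interpreting it as a mixture of the $\pm1$ cases.
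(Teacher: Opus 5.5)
Your proof follows the paper's: Bayes' rule with prior $\frac{1+by}{2}$, the TCI factorization, and identifying each bracket with $2\P(v_{j_\ell}\mid y)$ (the paper writes this factor as $\frac{1+v_{j_\ell}}{2}\psi_{j_\ell}+\frac{1-v_{j_\ell}}{2}(1-\psi_{j_\ell})$ and its $\eta$-analogue rather than checking four cases), while your continuous-mapping argument justifies the consistency claim that the paper only asserts. Two small corrections that do not affect validity: the $y$-independent constant is $2^4$ (since $1+by=2\P(y)$), not $2^3\cdot\frac12$, and recovering $\bu$ and $b$ from the off-diagonal entries of $\bSigma$ and $\bT$ needs at least three verifiers with $\pi_j\neq\frac12$, not merely $\bu\neq 0$.
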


\begin{proof}
The proof follows by a direct calculation: \begin{align*}
    \P(y(q,r_i)=1\mid v_{i,j_1}, v_{i,j_2}, v_{i,j_3}) 
    &\propto \P(v_{i,j_1}, v_{i,j_2}, v_{i,j_3}\mid y(q,r_i)=1)\P(y(q,r_i)=1)\\
    &= \P(y(q,r_i)=1)\prod_{\ell=1}^3\P(v_{j_\ell}(q,r_i)\mid y(q,r_i)=1)\\
    &= \P(y(q,r_i)=1)\prod_{\ell=1}^3\left[\frac{1+v_{j_\ell}}{2}\psi_{j_\ell} + \frac{1-v_{j_\ell}}{2}(1-\psi_{j_\ell})\right]\\
    &= \left(\frac{1+b}{2}\right)\prod_{\ell=1}^3\left[v_{j_\ell}\psi_{j_\ell} + \frac{1-v_{j_\ell}}{2}\right],
\end{align*} where the first equality holds by TCI. Similarly, \begin{align*}\P(y(q,r_i)=-1\mid v_{i,j_1}, v_{i,j_2}, v_{i,j_3}) &\propto \left(\frac{1-b}{2}\right)\prod_{\ell=1}^3\left[\frac{1+v_{j_\ell}}{2}(1-\eta_{j_\ell}) + \frac{1-v_{j_\ell}}{2}\eta_{j_\ell}\right]\\
&= \left(\frac{1-b}{2}\right)\prod_{\ell=1}^3\left[\frac{1+v_{j_\ell}}{2} - v_{j_\ell}\eta_{j_\ell}\right].\end{align*} In summary, we have that
\begin{equation*}
    \P(y(q,r_i)=y\mid v_{i,j_1}, v_{i,j_2}, v_{i,j_3}) \propto (1+by)\prod_{\ell=1}^3\left[1-yv_{j_\ell} + v_{j_\ell}\left((1+y) \psi_{j_\ell} - (1-y) \eta_{j_\ell}\right)\right],
\end{equation*} as was to be shown.
\end{proof}

\section{Additional details for \method}\label{app:implementation-details}
In this section, we record additional implementation details and design considerations. 

\textbf{Batching} As mentioned in the main text, our method can operate in both query-conditional and batched modes. When batching, we vertically concatenate score matrices $\bV_1,\ldots,\bV_\ell$ corresponding to queries $q_1,\ldots,q_\ell$ to form a `tall' score matrix $\bV$ and apply \method{} to $\bV$ to learn an ensemble. Then, for each query $q$, we return the response with the highest predicted probability of correctness in the corresponding sub-matrix of $\bV$.

\textbf{Dropping} Various heuristics can be used to drop potentially poor verifiers. By default, we use a balanced-accuracy criterion. After obtaining the transformed matrix $\bV$, we apply the method-of-moments estimate in \citet{jnadler15} to the entire matrix, and drop verifiers with estimated balanced accuracy less than $\frac{1}{2}$ before proceeding with posterior estimation. While informally motivated, we find this to be robustly performance-enhancing. Notably, even prior works that focus on theoretical guarantees (e.g., \citet{nadler15}) use similar dropping heuristics in real-data settings.

\textbf{Posterior aggregation} An alternative approach to aggregating posterior estimates from triplets is to \textit{merge} verifiers prior to transformation. Intuitively, if a `verifier' is the average of scores from several other verifiers, a triplet including it will incorporate information from more than three verifiers. In the extreme case, one can imagine condensing all verifiers into a single triplet. This approach was inspired by \citet{NIPS2016_f2d887e0}, who assume in the context of unsupervised risk estimation that a variable set can be partitioned into three `views'. 
In practice, we find simple averaging to be superior unless the number of verifiers is large. 

\textbf{Cross-fitting} In principle, one can form the expected accuracy objective (\ref{eq:est-acc}) through cross-fitting---splitting the $N$ responses into random folds, and ensuring that posterior estimates and predictors see disjoint data. Because our setting is \emph{transductive}---we are not interested in generalization, and exclusively care about predicting an in-sample label---the traditional statistical intuition that one must avoid label leakage has less force here. In practice, for the range of $N$ considered in this work, we do not find that cross-fitting produces reliable improvements in selection accuracy, and hence choose to omit it by default.

\textbf{Predictors} In principle, once pseudo-labels have been acquired, a natural and parameter-free selection rule exists---selecting the response with the highest probability of correctness according to the pseudo-labels. The primary advantage of using pseudo-labels to fit an alternative predictor is therefore not that this makes selection possible, but that the fitting process can (i) act as a form of regularization (ii) encode favorable inductive biases and (iii) allow one to incorporate auxiliary information or covariates that the pseudo-labeling process does not have access to. These reasons, while not fully explored in the present text (e.g., we do not use auxiliary covariates in any of our experiments), may nonetheless enhance the general applicability of \method{}.

\textbf{Alternative Transformations} Since real-valued or rubric-valued scores may encode richer information than binary ones, it can be desirable to apply real-valued transformations that preserve this granularity.\footnote{The intuition that binarization is strictly harmful because it `loses information' is incorrect, however, as binarization can improve the decisiveness of a verifier signal.} One way of doing so is to use Box-Cox transformations \citep{box1964analysis}.

\section{Experimental details}\label{app:experimental-details}

Several of our experiments involve generation of responses and verifications from open-source models. All such generation was done through vLLM 0.13.0 \citep{kwon2023efficient} on a compute node with 8 NVIDIA H100 GPUs (80 GB memory each). We use the following sampling parameters:

\begin{itemize}
    \item \texttt{temperature}: For generation of repeated responses, we set the temperature to be 1.0 by default. Alternative values were used if recommended by the HuggingFace model card. For verifications, a temperature of 0.0 was used for replicability. Note that this does \textit{not} affect notions of conditional independence like TCI, since there is randomness in responses not accounted for by $y(q,r) = \pm 1$ even when verifier signals are deterministic.
    \item \texttt{top\_p}: 0.95 by default. Alternative values were used if recommended by the HuggingFace model card.
    \item \texttt{max\_model\_len}: The maximum possible value for each model. 
\end{itemize}

It is also necessary to extract ground truth correctness labels for the responses that we generate. For multiple choice questions, we deterministically parse the tagged final answer. For short answer questions, we used Qwen3-Next-80B-A3B-Instruct to compare the tagged final answer to a ground truth solution. Appendix \ref{app:prompts} contains prompts for this ground truth extraction pipeline, as well as for generation and verification. We hand-audited a large fraction of responses (around 25\%) to check the automated ground truth extraction and evaluation procedure.

\textbf{Tie-breaking} In our test-time scaling experiments, a single response must be selected out of $K$ candidates. However, many selection rules (e.g. picking the response with the largest logit) can produce ties. In such cases, we record the accuracy of the selection as the fraction of tied responses which are correct.

\subsection{Baselines}\label{app:baselines}

All experimental settings implement and report the following baselines.
\begin{itemize}
    \item \textbf{Pass@1} This baseline simply returns the first response $r_1$ to a query. It requires neither repeated sampling nor verification.
    \item \textbf{Pass@k} This baseline involves generating $k$ responses $r_1,\ldots,r_k$ for each query, and deeming a query solved if at least one response is correct \citep{chen2021evaluating}. This baseline requires knowledge of ground truth labels and does not involve verification.
    \item \textbf{Majority vote} Given $k$ responses $r_1,\ldots, r_k$ to a query, majority vote returns the most common response $r^\star$. This baseline requires neither ground truth labels nor verification.
    \item \textbf{Naive ensemble} All verifiers are weighted equally, so the score of each response is the average of its post-normalization verifier scores. This method requires verifiers but is unsupervised. To ensure scores produced by different verifiers are comparable, we normalize each verifier's scores to $[-1,1]$ using min-max normalization.\footnote{Concretely, given scores $v(q,r_1),\ldots,v(q,r_N)$ for responses to a query, we map each score via $x\mapsto 2\times\frac{x - \min_i v(q,r_i)}{\max_i v(q,r_i) - \min_i v(q,r_i)}-1$}
    \item \textbf{WEAVER \citep{saad2025shrinking}} This semi-supervised baseline uses a held-out set of queries with ground truth labels to estimate $\P(y_i=1)$ and to adaptively binarize and drop verifiers. The `inner loop' that provides parameter estimates is gradient descent on a method-of-moments objective that assumes joint conditional independence.
    \item \textbf{Logistic Regression} This semi-supervised baseline involves fitting a logistic regression using ground truth labels for 5\% of queries and verifier outputs as covariates. That is, we fit $\beta_0,\beta$ in the model $P(y_{ik} = 1) = \sigma(\beta^T(\mathbf{V}_k)_{i\bullet} + \beta_0)$ using the defaults in scikit-learn. 
    \item \textbf{Naive Bayes} This semi-supervised baseline assumes conditional independence, and selects the response with the largest value of
    \[\frac{\P(y_{i}=1\mid \textbf{V}_{i\bullet})}{\P(y_{i}=-1\mid \textbf{V}_{i\bullet})} = \frac{\prod_j\P(v_{ij}\mid y_{i}=1)\P(y_{i}=1)}{\prod_j\P(v_{ij}\mid y_{i}=-1)\P(y_{i}=-1)},\]
    where all probabilities on the RHS are estimated from 5\% of queries that have ground truth labels. As this baseline assumes binary scores, we employ median binarization---the top 50\% of each verifier's scores are mapped to 1, while the remainder are mapped to -1.
    \item \textbf{Oracle Best Verifier} The verifier with the highest balanced accuracy. 
\end{itemize}
For convenience, we collect the requirements of each baseline method in Table \ref{tab:requirements}.

  \begin{table}[t]
  \centering
  \begin{tabular}{lcccc}
  \toprule
  Method & Requires Verifiers & Supervised &
  Oracle & Question-Conditional \\
  \midrule
      Pass@1 & No & No & No & Yes \\
    Pass@k & No & Yes & Yes & Yes \\
  \method{} & Yes & No & No & Yes \\
  WEAVER \citep{saad2025shrinking} & Yes & Yes & No & No \\
  Naive Ensemble & Yes & No & No & Yes \\
  Majority Vote & No & No & No & Yes \\
  Logistic Regression & Yes & Yes & No & No \\
  Naive Bayes & Yes & Yes & No & No \\
  Oracle Best Verifier & Yes & Yes & Yes &
  Yes \\
  \bottomrule
  \end{tabular}
  \caption{Comparison of methods and requirements.}
  \label{tab:requirements}
  \end{table}

\subsection{Experiments on datasets from \citet{weaver}}\label{app:weaver}

We use the data available at \url{https://huggingface.co/collections/hazyresearch/weaver} with zero modifications. Therefore, our setting consists of:

\begin{itemize}
    \item \textbf{Benchmarks} GPQA, GPQA Diamond, MATH500, and subsamples of MMLU and MMLU Pro. The MMLU subsample consists of all college-level biology, chemistry, physics, mathematics, computer science, and medicine questions. The MMLU Pro subsample is 500 randomly chosen questions out of 12,000. 
    \item \textbf{Generator models} Llama 3.1 8B Instruct and Llama 3.3 70B Instruct.
    \item \textbf{Verifier models} 33 open-source reward models and binary LM judges, which output real-valued and binary scores respectively. When the generator model is Llama 3.1 8B Instruct, all binary LM judges are excluded. See Table \ref{tab:weaver-verifiers}
    \item \textbf{Number of generations per question} 100.
\end{itemize}

\begin{table}[ht]
\centering
\begin{tabular}{llcc}
\toprule
  Name & Type & 8B & 70B \\ \midrule
  DeepSeekLlama70B & binary & No & Yes \\
  DeepSeekQwen32B & binary & No & Yes \\
  Llama-3.3-70B-Instruct & binary & No & Yes \\
  Meta-Llama-3.1-405B-Instruct-quantized.w8a16 & binary & No & Yes \\
  Mixtral-8x22B-Instruct-v0.1 & binary & No & Yes \\
  Qwen/Qwen2.5-72B-Instruct & binary & No & Yes \\
  SkyT1 & binary & No & Yes \\
  WizardLM-2-8x22B & binary & No & Yes \\
  ArmorRM & reward model & Yes & Yes \\
  DecisionTreeReward27B & reward model & No & Yes \\
  DecisionTreeReward8B & reward model & No & Yes \\
  EurusPRMStage1\_avg & reward model & Yes & No \\
  EurusPRMStage1\_max & reward model & Yes & Yes \\
  EurusPRMStage1\_min & reward model & No & Yes \\
  EurusPRMStage2\_avg & reward model & No & Yes \\
  EurusPRMStage2\_max & reward model & Yes & No \\
  EurusPRMStage2\_min & reward model & Yes & Yes \\
  GPM & reward model & Yes & Yes \\
  GRMGemma & reward model & Yes & Yes \\
  GRMLlama32 & reward model & Yes & No \\
  GRM & reward model & Yes & Yes \\
  INFORM & reward model & No & Yes \\
  InternLM2Reward7B & reward model & Yes & Yes \\
  InternLM2RewardModel & reward model & No & Yes \\
  LDLRewardGemma & reward model & No & Yes \\
  OffsetBias & reward model & Yes & Yes \\
  QRMGemma & reward model & No & Yes \\
  QRM & reward model & Yes & Yes \\
  Qwen72B & reward model & No & Yes \\
  QwenPRM\_avg & reward model & Yes & Yes \\
  QwenPRM\_max & reward model & No & Yes \\
  QwenPRM\_min & reward model & Yes & Yes \\
  SkyworksGemma & reward model & No & Yes \\
  Skyworks & reward model & Yes & Yes \\
  URM & reward model & Yes & Yes \\ \bottomrule
\end{tabular}
\caption{Verifiers in \citet{saad2025shrinking} experimental setting}
\label{tab:weaver-verifiers}
\end{table}

Running our experiment on this data solely consists of parsing it into matrices before computing the performance of \method{} and baselines. To obtain numbers for WEAVER, we run the replication code available at \url{https://github.com/HazyResearch/scaling-verification}\footnote{This replication code reveals several typos in Tables 1 and 3 of \citet{weaver}, which we amend in Table \ref{tab:weaver-results} and related figures.}. 
All other baseline numbers are manually implemented, with semi-supervised methods allowed to use labels from 5\% of questions per benchmark.

The verifiers in this dataset are correlated given the true response: see Figure \ref{fig:weaver-corr} for the average conditional correlations in this data (conditional correlations weighted by label frequency). In particular, the correlations between score-based reward models are quite large, showing both positive and negative correlations. This is the case both for correlations given correct responses and given incorrect responses.

\begin{figure}
    \centering
  \subfloat[a][Correct response ($y = 1$).]{\includegraphics[scale=0.4]{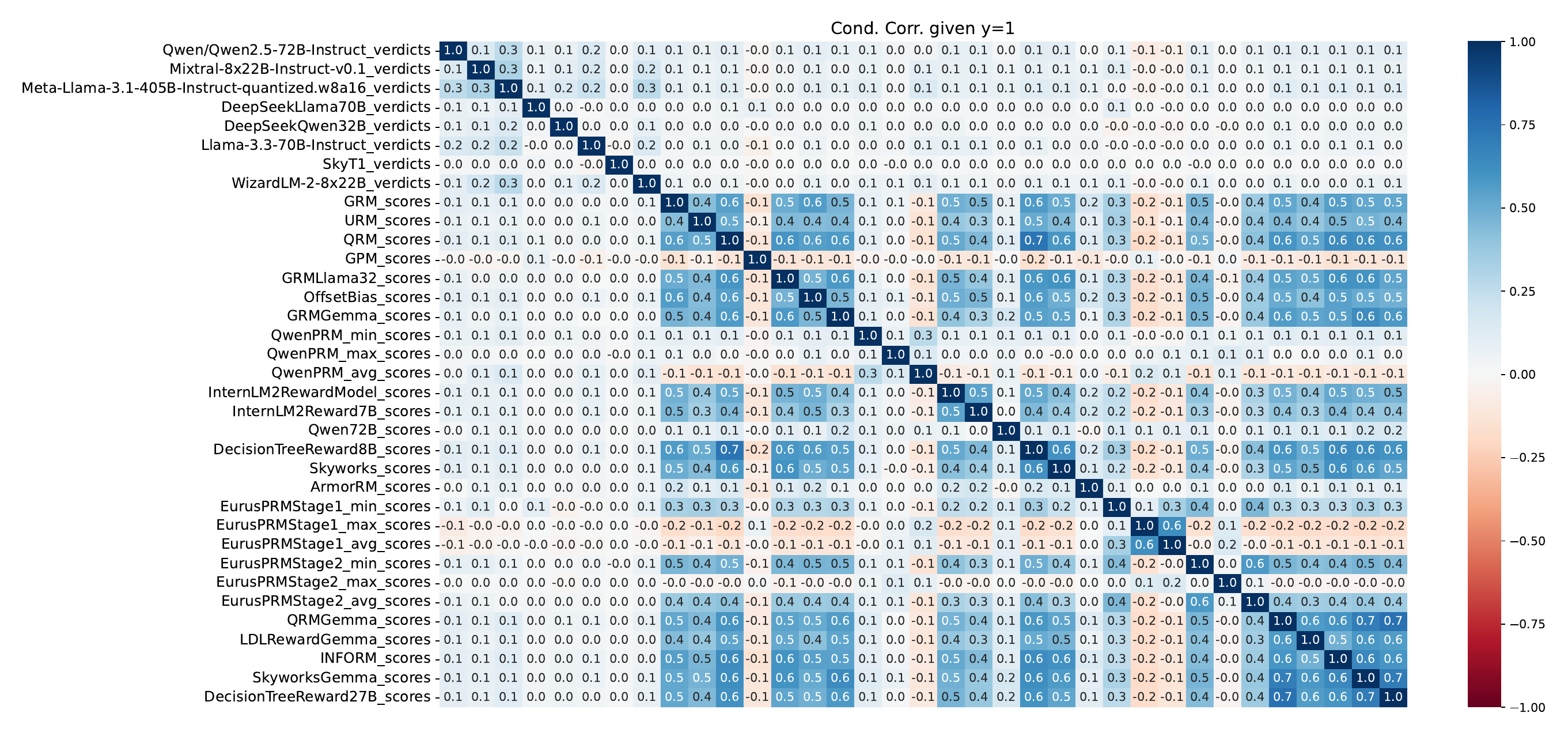}} \label{fig:mmlu-corr-a} \\
  \subfloat[b][Incorrect response ($y = -1$).]{\includegraphics[scale=0.4]{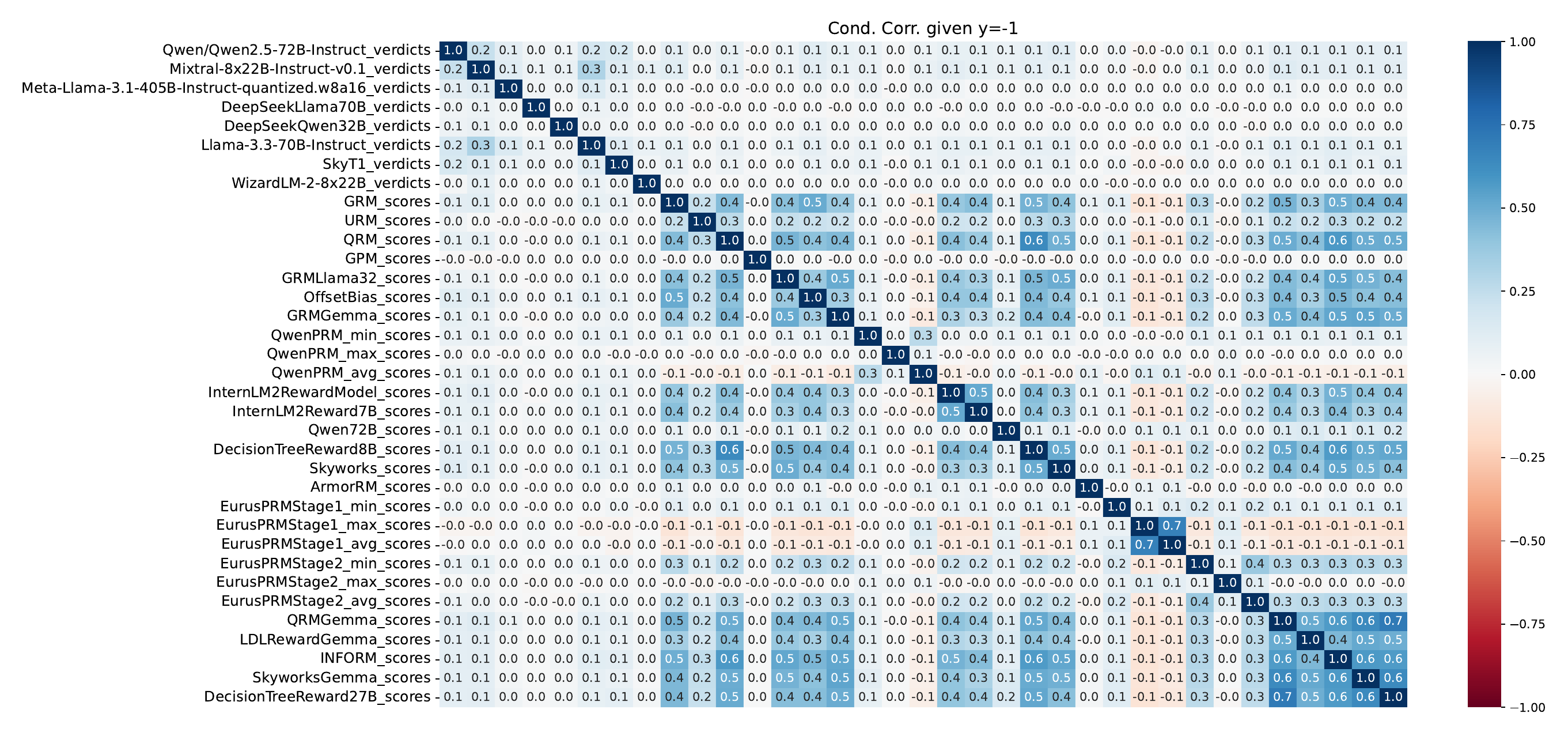}} \label{fig:mmlu-corr-b}
  \caption{Average conditional correlations in MMLU-Pro data based on model verdicts and scores for (a) correct responses ($y = 1$) and (b) incorrect responses ($y = -1$).}
    \label{fig:weaver-corr}
\end{figure}

\subsection{Prompts for generation, verification, and ground truth extraction}\label{app:prompts}

We used variants of the following two prompts (the first was adapted from \citet{saad2025shrinking}) to generate responses to a given question query.

\begin{tcolorbox}[colback=gray!5!white,colframe=gray!75!black,title=Example generation prompt for HLE,left=1mm, right=1mm, top=1mm, bottom=1mm, breakable]
\begin{verbatim}
Your response should be in the following format:
Explanation: {your explanation for your answer choice}
Answer: {your chosen answer}
Confidence: {your confidence score between 0% and 100% for your answer}
\end{verbatim}
\end{tcolorbox}

\begin{tcolorbox}[colback=gray!5!white,colframe=gray!75!black,title=Example generation prompt for IMO Shortlist,left=1mm, right=1mm, top=1mm, bottom=1mm, breakable]
\begin{verbatim}
Your task is to answer the following question:

```
{question}
```

Think step by step. Enclose your final result within a pair of `<answer>` tags 
without any additional formatting.

Your response should look like this:

```
{{thinking}}
<answer>
{{your_answer_here}}
</answer>
```
\end{verbatim}
\end{tcolorbox}

The following prompt was adapted from \citet{saad2025shrinking} for evaluation of extracted short-answer solutions against ground truth. For multiple choice questions, answers were instead deterministically parsed.

\begin{tcolorbox}[colback=gray!5!white,colframe=gray!75!black,title=Example evaluation prompt,left=1mm, right=1mm, top=1mm, bottom=1mm, breakable]
\begin{verbatim}
    Compare the following solutions to the given problem and determine 
    if they are equivalent. Return True only if the solutions are equivalent.
    
    Solutions are short response, and may include mathematical expressions. 
    You should carefully check for semantic equivalence. 
    For example, 'One-half' and '\\frac{{10}}{{20}}' are equivalent.
    
    Solution 1:
    {extracted}

    Solution 2:
    {ground_truth}
    
    Enclose your final verdict as : <verdict>VERDICT</verdict>. 
    For example, if True, write <verdict>True</verdict>.

\end{verbatim}
\end{tcolorbox}

The following is a sample prompt used for verification generation given a sample response and the original question query. We modified prompts for each verifier model to adapt to its capabilities and idiosyncrasies. 
\begin{tcolorbox}[breakable, colback=gray!5!white,colframe=gray!75!black,title=Example verification prompt,left=1mm, right=1mm, top=1mm, bottom=1mm]
\begin{verbatim}
You are a strict auditor for technical tasks (math/science/coding/logic).

You get a problem and a candidate solution. 
Score correctness WITHOUT any ground-truth.

GOAL: minimize false positives. Assume the solution is wrong until proven 
correct. Do NOT give benefit-of-the-doubt.

Only reward what you can independently confirm from the problem.

Process (must follow):
1) Extract the FINAL ANSWER (last clearly committed result). If 
ambiguous/conflicting -> NOT VERIFIED.
2) Independently check it: do at least ONE concrete verification step 
(re-derive a key equation, test a case, check units, run through logic, etc.). 
If you cannot perform a real check -> NOT VERIFIED.
3) Decide VERIFIED only if your check(s) confirm the final answer.

Hard constraints:
- Scores 4 or 5 are ONLY allowed if VERIFIED.
- If NOT VERIFIED, score MUST be <=3 (and usually 2 unless strong partial 
progress).
- If you find a fatal flaw or the final answer is wrong -> score in {0,1,2}.
- If uncertain, choose the LOWER score.

Rubric (0-5):
5 = VERIFIED final answer; reasoning sound/complete; no meaningful gaps.
4 = VERIFIED final answer; reasoning has gaps/mistakes but answer still correct.
3 = NOT VERIFIED, but strong partial progress + multiple correct key steps; 
close to verifiable.
2 = Some correct ideas, but major gaps/errors OR cannot justify final answer.
1 = Mostly wrong; minimal relevant correctness.
0 = Non-solution/irrelevant/refusal/nonsense.

Output format:

1) Brief analysis that begins with `VERIFIED' or `NOT VERIFIED' 
and mentions your concrete check.

2) New line at end: <score>X</score> where X is 0..5. Nothing after </score>."

Problem:
```
{query}
```

Candidate Solution:
```
{generated responses}
```

Verify the solution as best you can from the problem. 
Then output the final score as <score>X</score>.
\end{verbatim}
\end{tcolorbox}

\subsection{Experiments on Humanity's Last Exam}\label{app:hle}

This experimental setting involves generation of responses and verifications through the Google Gemini API, OpenAI API, and DeepSeek API in addition to local generation using open-source models. 
\begin{itemize}
    \item \textbf{Benchmark} A subsample of Humanity's Last Exam. As many verifiers do not natively support multi-modal input, we first removed 394 questions requiring multi-modal input from the 2477 total available at \texttt{cais/hle-rolling} on HuggingFace. A batch API request for 100 responses per question from Gemini-3-pro-preview was then submitted with the following sampling parameters: \texttt{temperature = 1.0}, \texttt{top\_p = 0.95}, and a token limit of 30,000 per response. We then obtained our final sample by taking the first 50 responses from each question with at least 50 successful responses and at least one correct response (649 total; 181 multiple choice, 468 exact match). A notable feature of this benchmark is its coverage of a wide range of topics ranging from more standard subjects such as mathematics to fields in the humanities such as dance and literature. See Table \ref{tab:hle-subjects} for a categorical breakdown of the subsample, where we use the categories defined in the original HuggingFace dataset \texttt{cais/hle-rolling}. 
    \item \textbf{Generator model} Gemini 3 Pro Preview
    \item \textbf{Solution extraction model:} Qwen3-Next-80B-A3B-instruct
    \item \textbf{Verifier models} \begin{itemize}
    \item Qwen2.5-72B-Instruct
    \item Skywork-Critic-Llama-3.1-70B
    \item gpt-oss-120b (no Harmony response format\footnote{See \url{https://developers.openai.com/cookbook/articles/openai-harmony}. Interestingly, we found that omitting Harmony slightly improves verification quality on ultra-hard tasks.})
    \item Gemini 3 Flash Preview
    \item DeepSeek-V3.2
    \item GPT-5.2 (high reasoning)
    \item GPT-5 mini (high reasoning)
  \end{itemize}
    \item \textbf{Number of Generations per Question} 50
\end{itemize}

\begin{table}[ht]
\caption{Category summary for HLE subset. }\label{tab:hle-subjects}
\centering
\begin{tabular}{lc}
\toprule
\textbf{Category} & \textbf{Count} \\ \midrule
Math & 141 \\
Humanities/Social Science & 110 \\
Other & 87 \\
Computer Science/AI & 84 \\
Biology/Medicine & 77 \\
Physics & 77 \\
Chemistry & 48 \\
Engineering & 18 \\
Chess/Logic/Puzzle & 7 \\ \bottomrule
\end{tabular}
\end{table} 

The pooled conditional correlation in this data (computed on all pooled solutions, conditional on the ground-truth label; thus effectively weighted by label frequency) are shown in Figure \ref{fig:hle-corr}. We observe positive correlations among verifiers with two main blocks: (1) two weaker open-source models Qwen2.5-72B-Instruct and Skywork-Critic-Llama-3.1-70B; and (2) four stronger models gpt-oss-120b, DeepSeek-V3.2, GPT-5.2 (high reasoning), and GPT-5 mini (high reasoning). 

We impute missing rubric scores as 0 (i.e. verifier identifies the response as fully incorrect). Across the seven verifier models, 8.63\% of scores are missing, primarily due to batch API call failures, safety-related refusals/flags, and length or context-window constraints. 

\begin{figure}
    \centering
    \subfloat[a][Correct response ($y = 1$).]{\includegraphics[scale=0.5]{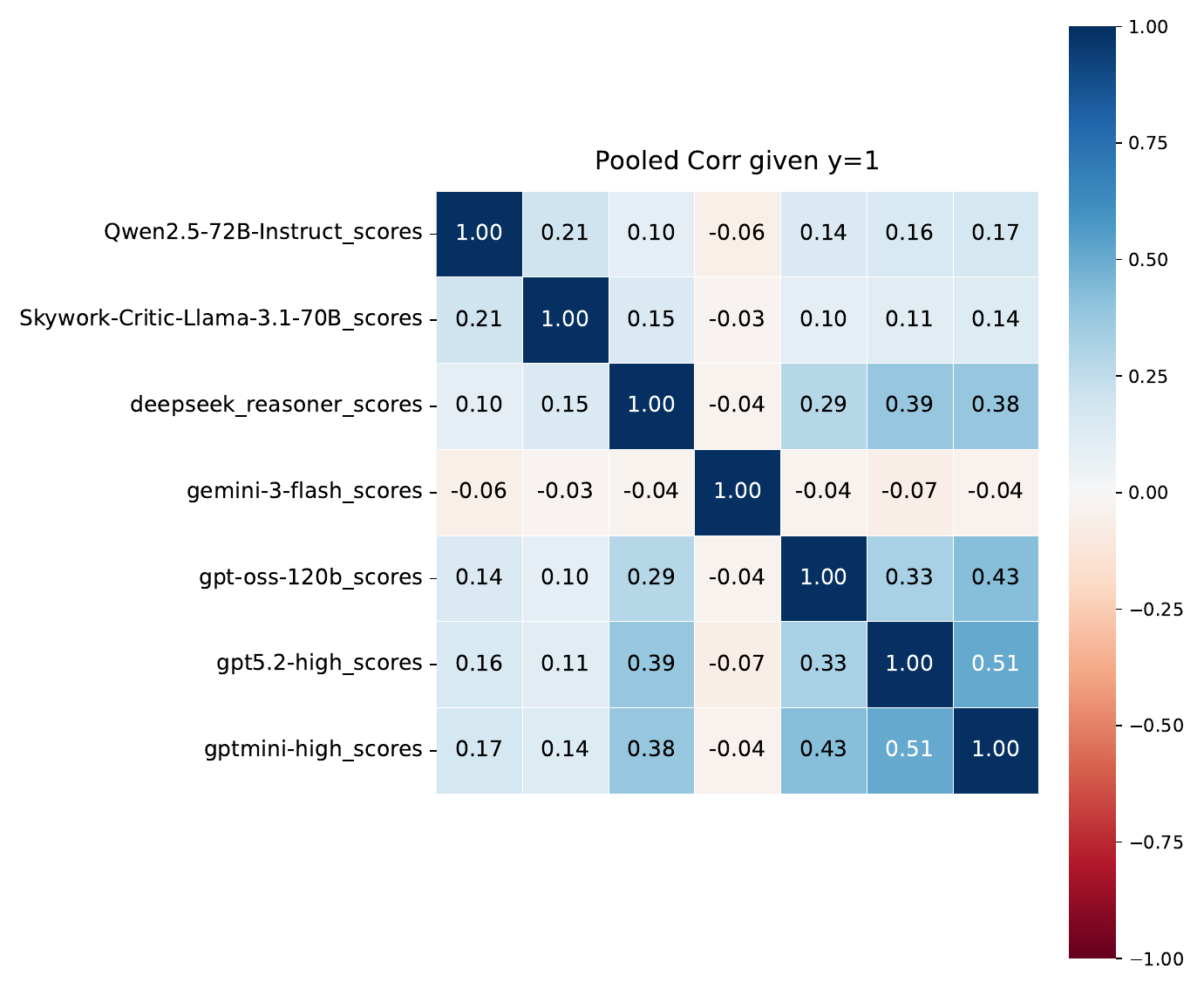}} \label{fig:hle-corr-a} \\
  \subfloat[b][Incorrect response ($y = -1$).]{\includegraphics[scale=0.5]{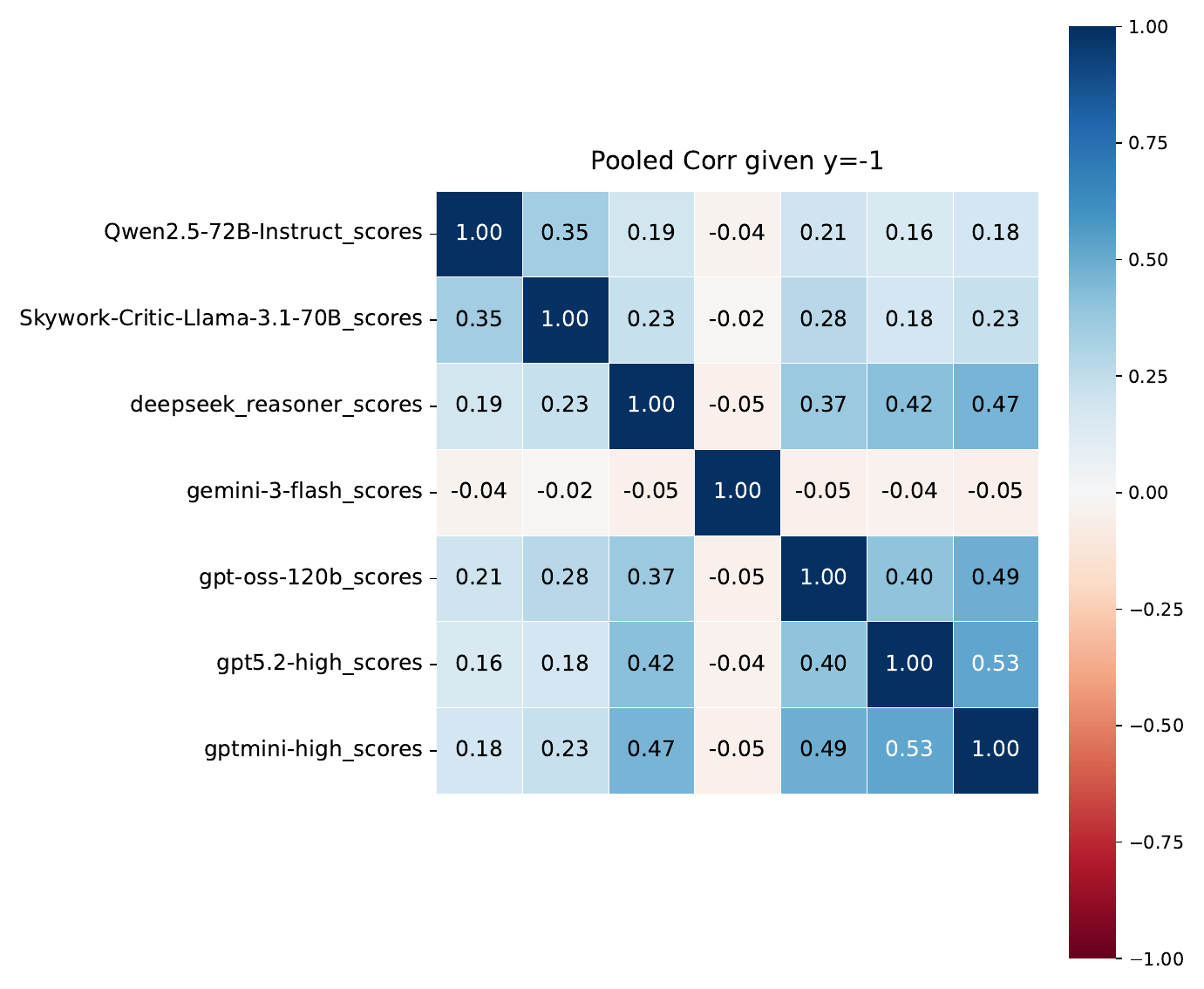}} \label{fig:hle-corr-b}
  \caption{Pooled correlations in HLE data  conditional on (a) correct responses ($y = 1$) and (b) incorrect responses ($y = -1$). Raw scores are used without normalization or binarization.} 
    \label{fig:hle-corr}
\end{figure}

\subsection{Experiments on IMO Shortlist}\label{app:imo}
All data for this experiment were generated from open-source models. 
\begin{itemize}
    \item \textbf{Benchmark} The IMO Shortlist subset of IMO AnswerBench \citep{luong-etal-2025-towards}. This 123-question benchmark consists of modified versions of past problems in the IMO Shortlist. The modification, which is done by experts, helps avoid memorization. See Table \ref{tab:imo-subjects} for a categorical breakdown of IMO Shortlist subset, based on the standard categories of Algebra, Combinatorics, Geometry, and Number Theory.
    \item \textbf{Generator model} Qwen3-30B-A3B-Thinking-2507
    \item \textbf{Solution extraction model} DeepSeek-R1-Distill-Llama-70B
    \item \textbf{Verifier models} \begin{itemize}
    \item DeepSeek-R1-Distill-Qwen-32B
    \item Kimi-Linear-48B-A3B-Instruct
    \item Llama-3.3-70B-Instruct
    \item Ministral-3-14B-Reasoning-2512
    \item Ministral-3-8B-Instruct-2512
    \item NVIDIA-Nemotron-3-Nano-30B-A3B-BF16
    \item Qwen3-30B-A3B-Thinking-2507
    \item gemma-3-27b-it
    \item gpt-oss-20b
  \end{itemize}
    \item \textbf{Number of generations per question} 50
\end{itemize}

\begin{table}[ht]
\caption{Category summary for IMO Shortlist subset. }\label{tab:imo-subjects}
\centering
\begin{tabular}{lc}
\toprule
\textbf{Category} & \textbf{Count} \\ \midrule
Algebra & 35 \\
Combinatorics & 45 \\
Number Theory & 35 \\ \bottomrule
\end{tabular}
\end{table} 

The average conditional correlation in this data (conditional correlations weighted by label frequency) are uniformly mild and positive (see Figure \ref{fig:imo-shortlist-corr}). Further, verifier balanced accuracies are, with a small exception in gpt-oss-20b, homogeneous and only slightly better than random (see Figure \ref{fig:imo-shortlist-balanced}).

We impute missing rubric scores as 0 (i.e. verifier identifies the response as fully incorrect). Across the nine verifier models, 4.15\% of scores are missing, primarily due to length or context-window constraints. 

\begin{figure}[h!]
    \centering
    \includegraphics[width=0.7\linewidth]{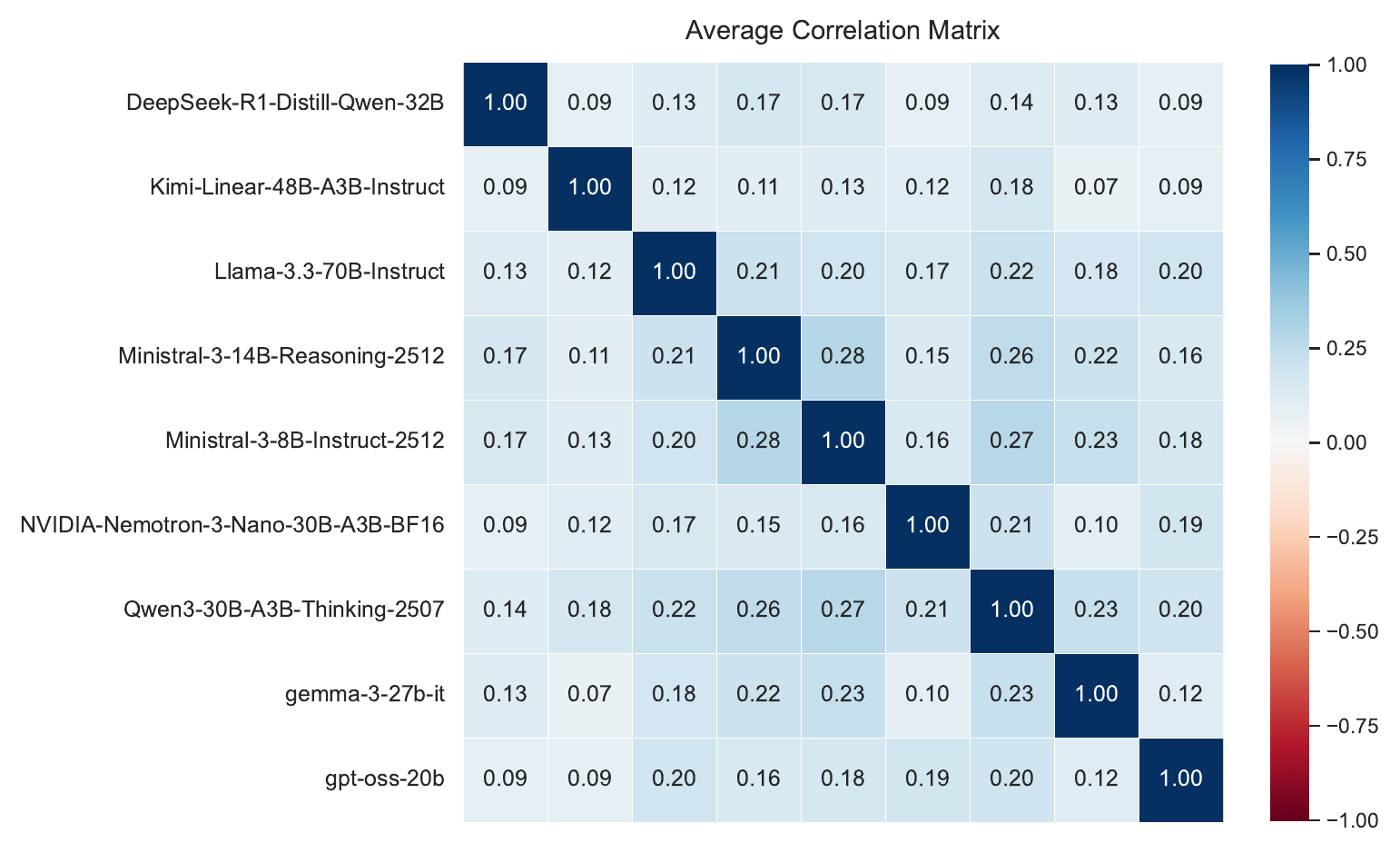}
    \caption{Expected conditional correlations of the verifiers given response correctness averaged over all responses (i.e. $(i,j)$th entry is $\mathrm{corr}(v_i, v_j|y = 1) p(y = 1) + \mathrm{corr}(v_i, v_j|y = -1) p(y = -1)$) in IMO Shortlist data. Verifier scores are used without normalization or binarization.}
    \label{fig:imo-shortlist-corr}
\end{figure}

\begin{figure}
    \centering
    \includegraphics[width=0.75\linewidth]{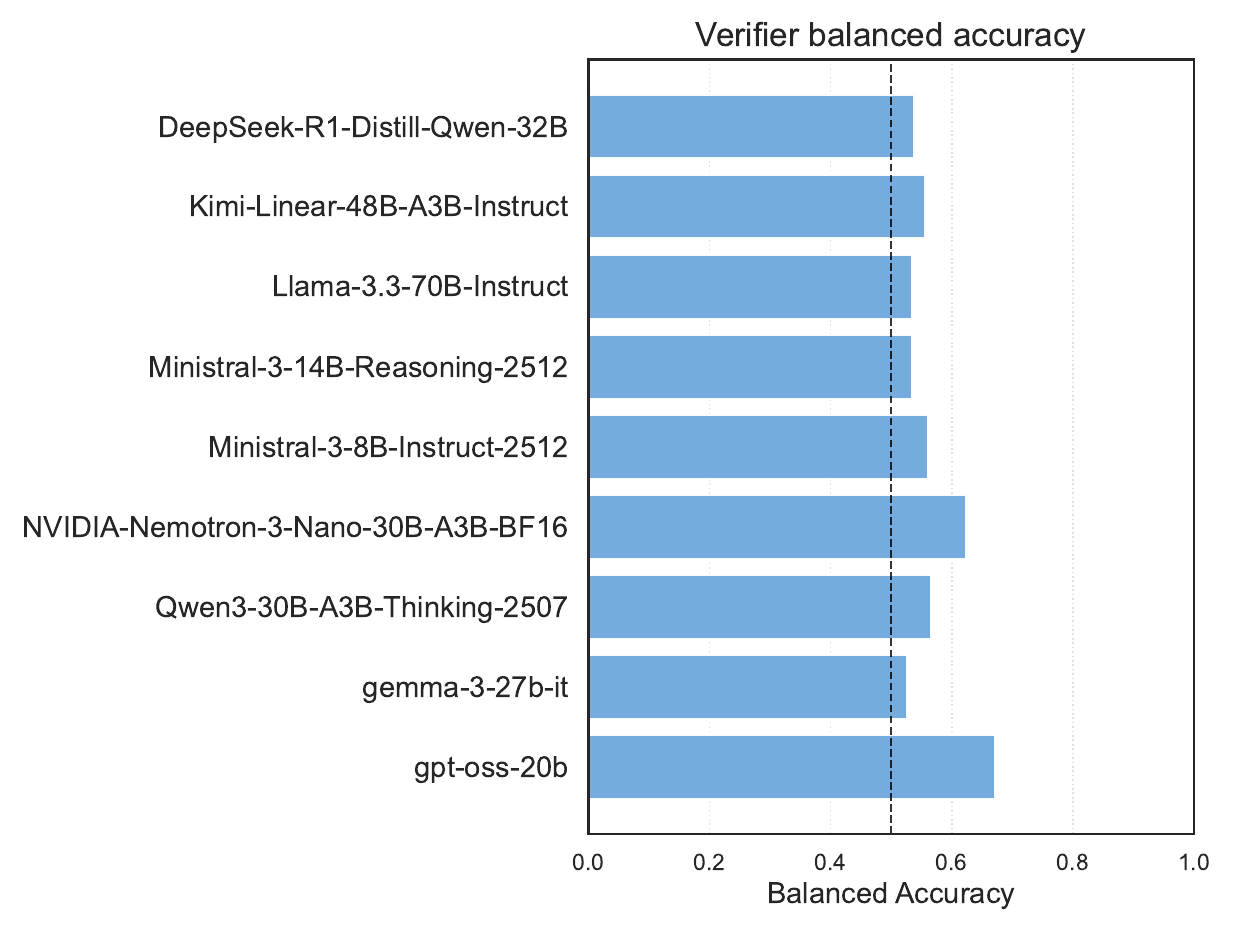}
    \caption{Balanced accuracies of verifiers on IMO Shortlist data.}
    \label{fig:imo-shortlist-balanced}
\end{figure}

\subsection{Mixed data ablation}\label{app:mix-ablation}

All data for our mixed data ablations are from \citet{weaver}. As in our main experiments, we make no modification to either the raw data or the verifier ensemble. 

\subsection{Repeated verification is redundant}\label{app:repeated-verification}

Let $v_j(r_i)$ denote i.i.d~verifications of response $r_i$. When verifications are i.i.d, the optimal ensemble is a naive ensemble, so the induced selection rule is the following:
\begin{align*}
    i^\star = \argmax_i \frac{1}{m}\sum_{j=1}^mv_j(r_i).
\end{align*}
Observe that for any $m$, by linearity of expectation, the naive ensemble $\frac{1}{m}\sum_{j=1}^m v_j(r_i)$ has the same sensitivity, specificity, and balanced accuracy as any individual $v_j(r_j)$. On the other hand, as $m\to \infty$, the above selection rule converges to $i^\star = \argmax_i \E[v_j(r_i)]$. Scaling repeated verifications therefore simply replaces a stochastic verifier with a deterministic equivalent that has identical balanced accuracy. While balanced accuracy and selection accuracy are distinct, as the former is not sufficient to determine the latter, this explains the observation by \citet{saad2025shrinking} that sampling five times from a strong verifier typically performs identically to sampling merely once (see their Table 21).

\subsection{Additional unsupervised baselines}\label{sec:additional-unsupervised}
In the main text, we focus on majority vote and naive ensemble as unsupervised baselines as these are the most commonly used methods in the language model setting. Indeed, we are unaware of any prior work that connects the literature on unsupervised ensemble learning to test-time scaling. In Table \ref{tab:additional-unsupervised}, we report the performance of existing unsupervised baselines on the \citet{saad2025shrinking} data. These are:
\begin{itemize}
    \item \citet{dawid1979maximum}: An Expectation-Maximization algorithm that assumes conditional independence.
    \item \citet{nadler16}: An extension of \citet{jnadler15} to settings with structured conditional dependence. 
    \item Gaussian mixture model: We assume that conditional on the binary label $y_i$, verifier scores are jointly multivariate Gaussian and estimate parameters through the EM algorithm.
\end{itemize}

\begin{table}[t]
\caption{Best-of-100 selection accuracy of additional unsupervised baselines on \citet{saad2025shrinking} data.}
\label{tab:additional-unsupervised}
\centering
\small
\setlength{\tabcolsep}{7pt}
\renewcommand{\arraystretch}{1.2}
\begin{tabular}{llccc}
\hline
\textbf{Size} & \textbf{Benchmark} & \textbf{DS} & \textbf{GMM} & \textbf{DCL} \\
\hline
\multirow{5}{*}{8B}
  & GPQA           & 0.3168 & 0.3709 & 0.2832 \\
  & GPQA Diamond  & 0.3295 & 0.4184 & 0.3279 \\
  & MATH500        & 0.5513 & 0.6729 & 0.6804 \\
  & MMLU           & 0.7179 & 0.8032 & 0.7910 \\
  & MMLU-Pro       & 0.5394 & 0.6408 & 0.6199 \\
\hline
\multirow{5}{*}{70B}
  & GPQA           & 0.5427 & 0.5670 & 0.5601 \\
  & GPQA Diamond  & 0.5240 & 0.5413 & 0.5606 \\
  & MATH500        & 0.8115 & 0.8726 & 0.8693 \\
  & MMLU           & 0.8480 & 0.8952 & 0.9013 \\
  & MMLU-Pro       & 0.7497 & 0.8263 & 0.8220 \\
\hline
\end{tabular}
\end{table}

These numbers uniformly fall below those of \method{} in Table \ref{tab:weaver-results}, indicating that our performance is not a mere consequence of recognizing that the unsupervised ensembling literature has relevance to LLM verification.

\section{Compute requirements}\label{app:compute-requirements}
 
Verifications by non-API models for Humanity's Last Exam and all responses and verifications for IMO AnswerBench were generated locally on a compute node with 8 NVIDIA H100 GPUs with 80 GB of memory each. All other aspects of this work, including the experiments on data from \citet{weaver}, did not require GPU usage or other forms of specialized compute.

\section{Replicability}

We open-source our raw HLE and IMO Shortlist data at \url{https://huggingface.co/FUSE-verifiers}. Our supplementary material contains instructions for running the replication code of \citet{weaver}, which we use to generate the Weaver numbers in Table \ref{tab:weaver-results}. We will open-source a detailed version of our experimental pipeline (data generation, evaluation, etc.). 

\end{document}